\documentclass{article}

\usepackage[preprint]{neurips_2026}

\usepackage[utf8]{inputenc}
\usepackage[T1]{fontenc}
\usepackage{hyperref}
\hypersetup{hypertexnames=false}
\usepackage{url}
\usepackage{microtype}
\usepackage{graphicx}
\usepackage{booktabs}
\usepackage{array}
\usepackage{nicefrac}
\usepackage{xcolor}
\usepackage{tikz}
\usetikzlibrary{positioning}

\usepackage{algorithm}
\usepackage{algorithmic}
\usepackage{amsmath}
\usepackage{amssymb}
\usepackage{amsfonts}
\usepackage{mathtools}
\usepackage{amsthm}

\graphicspath{{figures/}}

\theoremstyle{plain}
\newtheorem{theorem}{Theorem}[section]
\newtheorem{proposition}[theorem]{Proposition}

\newtheorem{corollary}[theorem]{Corollary}
\theoremstyle{definition}

\theoremstyle{remark}

\newcommand{\softmax}{\operatorname{softmax}}
\newcommand{\R}{\mathbb{R}}
\newcommand{\calX}{\mathcal{X}}
\newcommand{\calB}{\mathcal{B}}
\newcommand{\calT}{\mathcal{T}}
\newcolumntype{L}[1]{>{\raggedright\arraybackslash}p{#1}}

\title{Vertex-Softmax: Tight Transformer Verification via Exact Softmax Optimization\thanks{Code: \url{https://github.com/navidrezazad/VertexSoftmax}.}}

\author{%
  Navid Rezazadeh\thanks{University of California, Irvine, \texttt{nrezazad@uci.edu}. \quad \textsuperscript{\ddag} Carnegie Mellon University, \texttt{agholami@andrew.cmu.edu}.}\\
  \And
  Arash Gholami Davoodi\footnotemark[3]\\
}

\begin{document}

\maketitle

\begin{abstract}
Certified verification of transformer attention requires bounding the softmax function over interval constraints on the pre-softmax scores. Existing verifiers relax softmax independently of the downstream objective, leaving avoidable slack. We prove that the exact optimum of this score-box problem is attained at a vertex of the constraint box, and establish a threshold structure theorem showing that, after sorting the objective coefficients, the optimum lies among only linearly many candidates, yielding the Vertex-Softmax primitive with log-linear complexity in the sequence length. We further prove a formal optimality result showing that Vertex-Softmax is the tightest sound bound obtainable from score intervals alone, characterizing precisely what additional structure (score correlations, score-value coupling) is needed for further improvement. Integrated into a CROWN (Convex Relaxation based Optimization for Worst-case Neurons)-style verifier with a formal soundness guarantee, Vertex-Softmax significantly improves certified rates and substantially tightens lower bounds across MNIST, Fashion-MNIST, and CIFAR-10 attention models, while consistently matching or outperforming alpha-CROWN and branch-and-bound baselines at a fraction of their cost.
\end{abstract}

\section{Introduction}

Certified verifiers for feedforward networks have matured into practical tools. Transformers remain stubbornly harder---and softmax attention is the reason. The normalized exponential couples all tokens, and existing verifiers must relax it at a cost in certificate tightness: every input where the relaxation is too loose is an input the verifier cannot certify. Closing that gap, making attention verification tighter without sacrificing scalability, is the motivation for this work.

\begin{table*}[!t]
\centering
\setlength{\tabcolsep}{4pt}
\caption{Comparison of verification methods at the softmax interface. The timing column gives representative wall-clock seconds per trial from selected small-attention experiments in Section~\ref{sec:experiments}; it is not the full-block hybrid runtime, which is broken down separately in Appendix Table~\ref{tab:runtime_breakdown_app}.}
\label{tab:method_comparison}
\begin{tabular}{@{}L{0.18\textwidth}L{0.40\textwidth}cL{0.13\textwidth}r@{}}
\toprule
Method & Treatment & Exact? & Corr.? & Sec./trial \\
\midrule
CROWN / auto\_LiRPA & Affine relaxation before contraction with $c$ & No & Partial$^\dagger$ & $0.1$--$0.2$ \\
Wei-LSE & Convex softmax bounds, then contracted with $c$ & No & No & $0.06$--$2.0$ \\
GaLileo-style & Linear softmax relaxation & No & No & $1.8$--$19.2$ \\
$\alpha$-CROWN & Optimized relaxation slopes over full graph & No & Partial$^\ddagger$ & $1.5$--$3.7$ \\
ABCrown-BaB & Branch-and-bound with verifier at nodes & No$^\S$ & Yes & $268$--$395$ \\
\midrule
Vertex-CROWN & Solves $\min_{s\in[\ell,u]} c^\top\!\softmax(s)$ exactly & Yes & No & $0.03$--$0.07$ \\
\bottomrule
\end{tabular}

\vspace{4pt}
\raggedright
$^\dagger$\,Before interval abstraction only.\quad
$^\ddagger$\,Through optimized graph relaxations.\quad
$^\S$\,Not as a local score-box primitive; gains correlations through domain splitting.
\end{table*}

CROWN/LiRPA-style verifiers propagate affine bounds through neural networks at scale~\citep{zhang2018efficient,xu2020automatic}, but attention introduces a normalized exponential map
\begin{equation}
  a(s)=\softmax(s),\qquad
  a_j(s)=\frac{\exp(s_j)}{\sum_{r=1}^K\exp(s_r)}.
\end{equation}
After a score-bounding pass, a verifier often knows only an independent score box
\begin{equation}
  \calB=\prod_{j=1}^K[\ell_j,u_j]
\end{equation}
and a direction $c\in\R^K$ induced by value-side bounds or by a downstream margin. The verifier's task at each attention row thus reduces to optimizing a linear objective over softmax outputs, constrained only by these independent score intervals:
\begin{equation}
  \min_{s\in\calB} c^\top \softmax(s).
  \label{eq:scorebox_problem}
\end{equation}
Generic softmax relaxations first relax the vector map $s\mapsto\softmax(s)$ and only then contract with $c$. We instead solve the direction-dependent scalar problem directly.

The result is exact and small. The minimum over the continuous score box is attained at a vertex of the constraint box. The intuition is that an optimizer trading off mass against cost can always push each coordinate to an endpoint without increasing the objective. After sorting $c$, the optimal vertex puts upper endpoints on the $m$ smallest coefficients and lower endpoints on the rest, for some threshold $m\in\{0,\ldots,K\}$. A na\"ive search examines $2^K$ vertices. We show that sorting the objective coefficients collapses this to exactly $K+1$ candidates---one sort and one prefix-sum sweep. We call this primitive \emph{Vertex-Softmax}.

This also identifies the information surface of the score-box interface. Any sound lower-bound procedure whose inputs are only $(c,\ell,u)$ cannot return a value larger than the exact minimum in~\eqref{eq:scorebox_problem}. Tighter certificates must use information discarded before this interface, such as score correlations, score--value coupling, or a reachable score set smaller than the independent box.

\paragraph{Contributions.}
\begin{itemize}
\item We prove vertex exactness and $K+1$ threshold exactness for~\eqref{eq:scorebox_problem}, giving an $O(K\log K)$ exact solver for the weighted-softmax box problem.
\item We derive a score-box information optimality statement: Vertex-Softmax is the tightest sound bound obtainable from independent score intervals alone.
\item We integrate the primitive into Vertex-CROWN, a sound CROWN-style attention verifier, and show large certified-rate improvements on MNIST and Fashion-MNIST patch-attention models, competitive or improved certificates against alpha-CROWN and branch-and-bound baselines on selected attention blocks, and full-block gains on attention--residual--MLP models with a CROWN-suffix hybrid.
\end{itemize}

\paragraph{Positioning.}
Scalable neural-network verifiers based on affine bound propagation, including CROWN, auto\_LiRPA, and $\alpha,\beta$-CROWN~\citep{zhang2018efficient,xu2020automatic,wang2021betacrown}, have been extended to transformer attention layers~\citep{shi2020robustness,bonaert2021transformers}. Within this line, recent softmax-focused relaxations tighten the vector-valued softmax bounds used inside the verifier. The convex bounds of \citet{wei2023convex} and the GaLileo relaxation of \citet{zhang2024galileo} are representative; both relax softmax rather than solve the direction-dependent score-box objective exactly.

On the other end of the precision--cost spectrum, exact methods such as MILP, MIQCP, SMT, and nonlinear branch-and-bound~\citep{tjeng2019evaluating,katz2019marabou,shi2025genbab} can give complete answers on small instances but do not scale to dense softmax attention. Our contribution sits between these extremes: Vertex-Softmax solves the recurring score-box subproblem exactly in $O(K\log K)$ time, complementing both relaxation-based and search-based approaches. A broader discussion of related work, including LLM-scale statistical and runtime-monitoring approaches, is in Appendix~\ref{app:related_work}. The exactness claim is local to the score-box interface; remaining sources of verifier looseness are discussed in Section~\ref{sec:limitations}. Table~\ref{tab:method_comparison} summarizes the landscape.

\section{Problem Setup}

We now formalize the interface between a CROWN-style score-bounding pass and the softmax layer, and define the scalar optimization problem that Vertex-Softmax solves. Table~\ref{tab:notation} collects the key notation.

\begin{table}[t]
\centering
\setlength{\tabcolsep}{4pt}
\caption{Key notation used throughout the paper.}
\label{tab:notation}
\begin{tabular}{@{}ll@{}}
\toprule
Symbol & Definition \\
\midrule
$\calX$ & Input perturbation set, usually an $\ell_\infty$ pixel box \\
$m(x)$ & Target margin to certify \\
$K$ & Number of keys/tokens in one attention row \\
$s\in\R^K$ & Pre-softmax score vector for one row \\
$a(s)$ & $\softmax(s)$ \\
$c\in\R^K$ & Fixed downstream coefficient vector \\
$\ell,u\in\R^K$ & Certified lower/upper score bounds \\
$\calB$ & Score box $[\ell_1,u_1]\times\cdots\times[\ell_K,u_K]$ \\
$F_c(s)$ & Directional softmax objective $c^\top\!\softmax(s)$ \\
$y_j$ & Exponentiated score $e^{s_j}$ \\
$L_j,U_j$ & Exponentiated bounds $e^{\ell_j},\,e^{u_j}$ \\
$R(y)$ & Linear-fractional objective $(c^\top y)\,/\,(\mathbf{1}^\top y)$ \\
$\tau_m$ & Threshold-vertex value after sorting $c$ \\
$L_{\mathrm{box}}$ & Exact lower bound $\min_{s\in\calB}F_c(s)$ \\
$L_{\mathrm{VC}}$ & Vertex-CROWN margin lower bound \\
$L_{\mathrm{hyb}}$ & Per-target max of CROWN and Vertex-CROWN bounds \\
\bottomrule
\end{tabular}
\end{table}

Let $\calX$ be an input box and let $m:\calX\to\R$ be a scalar margin. Certification means proving $m(x)\ge0$ for all $x\in\calX$, usually by computing a sound lower bound on $\min_{x\in\calX}m(x)$. For one attention row, write
\begin{equation}
  s_i(x)\in\R^K,
  \qquad
  a_i(x)=\softmax(s_i(x)).
\end{equation}
When a backward bound pass reaches this row, the downstream contribution at this interface is summarized by a fixed vector $c_i\in\R^K$, so the row certificate needs a lower bound on $c_i^\top\softmax(s_i(x))$.

The same pass supplies sound elementwise score bounds
\begin{equation}
  \ell_{ij}\le s_{ij}(x)\le u_{ij},\qquad j=1,\ldots,K,\quad x\in\calX,
\end{equation}
forming a product interval $\calB_i=\prod_{j=1}^K[\ell_{ij},u_{ij}]$. Dropping the row index, define
\begin{equation}
  F_c(s)=c^\top\softmax(s)
  =\frac{\sum_{j=1}^K c_j e^{s_j}}{\sum_{j=1}^K e^{s_j}},
  \qquad
  L_{\mathrm{box}}(c,\ell,u)=\min_{s\in\calB}F_c(s).
  \label{eq:lbox}
\end{equation}
The matching upper bound is $U_{\mathrm{box}}(c,\ell,u)=-L_{\mathrm{box}}(-c,\ell,u)$, so we state lower bounds only. Throughout, $K\ge1$, intervals are finite, degenerate intervals are allowed, and $c$ is fixed with respect to $s$. Equation~\eqref{eq:lbox} is exact for the product box, not necessarily for the reachable set $\{s(x):x\in\calX\}\subseteq\calB$.

\section{Exact Score-Box Softmax Optimization}
\label{sec:exact}

Set $y_j=e^{s_j}$, $L_j=e^{\ell_j}$, and $U_j=e^{u_j}$. The map $s\mapsto y$ sends $\calB$ bijectively to $\calB_y=\prod_{j=1}^K[L_j,U_j]$, and
\begin{equation}
  F_c(s)=R(y)\coloneqq
  \frac{\sum_{j=1}^K c_jy_j}{\sum_{j=1}^K y_j}.
  \label{eq:R_def}
\end{equation}
Thus the softmax problem is a bounded linear-fractional optimization problem over a positive box. The vertex property of linear-fractional programs over boxes is classical~\citep{dinkelbach1967nonlinear}; the contribution here is the $K+1$ threshold reduction and its integration into attention verification. Full proofs are in Appendix~\ref{app:proofs}.

\begin{theorem}[Score-box vertex exactness]
\label{thm:vertex}
Let $K\ge1$, $c\in\R^K$, and finite intervals $\ell_j\le u_j$. Then
\begin{equation}
  \min_{s\in\calB}F_c(s)=\min_{v\in\operatorname{Vert}(\calB)}F_c(v),
\end{equation}
and the analogous equality holds for the maximum.
\end{theorem}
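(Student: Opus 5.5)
We work in the exponentiated coordinates of \eqref{eq:R_def}. Since $s\mapsto y=e^{s}$ is a coordinatewise increasing bijection from $\calB$ onto $\calB_y=\prod_j[L_j,U_j]$, it sends vertices of $\calB$ to vertices of $\calB_y$, and $F_c(s)=R(y)$. It therefore suffices to show that $\min_{y\in\calB_y}R(y)$ is attained at a vertex of $\calB_y$. Existence of the minimum over the box is immediate: $R$ is continuous on the compact set $\calB_y$, because its denominator satisfies $\mathbf 1^\top y\ge\sum_j L_j>0$. Since $\operatorname{Vert}(\calB)\subseteq\calB$, the inequality ``$\le$'' in the theorem is trivial. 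The work is to show that some minimizer is a vertex.

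\textbf{Key step: coordinatewise monotonicity.} Fix all coordinates except $y_j$. Write $A=\sum_{r\ne j}c_ry_r$ and $B=\sum_{r\ne j}y_r$, where $B\ge0$ and $B>0$ if $K\ge2$. Then
\[
g(t)=\frac{A+c_jt}{B+t},\qquad t\in[L_j,U_j],
\]
is a linear-fractional function of one variable with positive denominator. Its derivative is
\[
g'(t)=\frac{c_jB-A}{(B+t)^2},
\]
whose sign does not depend on $t$. Hence $g$ is monotone (possibly constant) on $[L_j,U_j]$, and $\min g$ is attained at $t=L_j$ or $t=U_j$. For $K=1$ the function $R\equiv c_1$ is constant and the claim is trivial.

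\textbf{Iteration.} Start from any minimizer $y^\star$. For $j=1,\ldots,K$ in turn, replace $y^\star_j$ by whichever endpoint of $[L_j,U_j]$ gives a value of $g$ no larger than the current one. This is possible by the monotonicity step. Each replacement keeps the point in $\calB_y$ and does not increase $R$, so the point remains a minimizer. Coordinates already moved to endpoints are never changed again. After $K$ steps every coordinate sits at an endpoint, so we have a vertex achieving the minimum. Mapping back via $s_j=\log y_j$ gives a vertex of $\calB$ attaining $\min_{\calB}F_c$.

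The maximum case follows by applying the argument to $-c$, since $\max F_c=-\min F_{-c}$. Alternatively, one can choose the endpoint maximizing $g$ at each step.

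The only delicate point is making the per-coordinate reduction airtight. The sign of $g'$ must be constant on the whole interval, which the formula above shows directly. Ties, where $g'\equiv0$, must be allowed, which is why we pick an endpoint that does not increase $R$ rather than one that strictly decreases it. Degenerate intervals $L_j=U_j$ need no special treatment. As a remark, this is the classical quasi-linearity of linear-fractional functions cited from \citep{dinkelbach1967nonlinear}: the sublevel sets $\{c^\top y\le\lambda\mathbf 1^\top y\}$ are half-spaces. One could instead argue via the Dinkelbach parametrization $\min_y (c-\lambda^\star\mathbf 1)^\top y$, which is a linear program over a box and hence attained at a vertex. The coordinatewise argument is more elementary, so I would use it.
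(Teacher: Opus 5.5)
Your proof is correct and follows the same route as the paper's appendix proof. In both, one passes to $y=e^{s}$ and notes that along each coordinate $R$ is a linear-fractional function whose derivative has constant sign. A minimizer is then pushed coordinate by coordinate to an endpoint, with the $K=1$ case and the maximum (via $-c$) handled as in the paper.
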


The one-coordinate reason is simple: fixing all coordinates except $y_i$, the ratio has form $(c_i y_i+A)/(y_i+D)$ and its derivative has constant sign. Therefore each coordinate can be pushed to an endpoint without increasing the objective.

Sort the coefficients ascending, $c_{(1)}\le\cdots\le c_{(K)}$, and reindex $L,U$ accordingly. For $m\in\{0,\ldots,K\}$ define
\begin{equation}
  y^{(m)}_{(j)}=
  \begin{cases}
    U_{(j)}, & j\le m,\\
    L_{(j)}, & j>m,
  \end{cases}
  \label{eq:threshold_pattern}
\end{equation}
and
\begin{equation}
  \tau_m=R(y^{(m)})=
  \frac{\sum_{j\le m}c_{(j)}U_{(j)}+\sum_{j>m}c_{(j)}L_{(j)}}
       {\sum_{j\le m}U_{(j)}+\sum_{j>m}L_{(j)}}.
  \label{eq:tau_m}
\end{equation}

\begin{theorem}[Threshold exactness]
\label{thm:threshold}
For every $c\in\R^K$ and every finite box $\calB$,
\begin{equation}
  L_{\mathrm{box}}(c,\ell,u)=\min_{s\in\calB}F_c(s)=\min_{m=0,\ldots,K}\tau_m.
\end{equation}
Consequently $L_{\mathrm{box}}(c,\ell,u)$ can be evaluated in $O(K\log K)$ time.
\end{theorem}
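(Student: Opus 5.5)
We work in the exponentiated coordinates $y\in\calB_y=\prod_j[L_j,U_j]$, where $F_c(s)=R(y)$ by \eqref{eq:R_def}. By Theorem~\ref{thm:vertex}, the minimum is attained at some vertex $y^*$ with each $y^*_j\in\{L_j,U_j\}$. Let $\lambda^*=R(y^*)=L_{\mathrm{box}}$. The plan is to use the Dinkelbach-style characterization of this optimal value and show that some optimal vertex has the threshold pattern \eqref{eq:threshold_pattern}.

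\textbf{Step 1 (parametric reformulation).} Since $\mathbf{1}^\top y>0$ on $\calB_y$, the inequality $R(y)\ge\lambda^*$ is equivalent to $\sum_j (c_j-\lambda^*)y_j\ge 0$. Hence
\[
\min_{y\in\calB_y}\sum_j (c_j-\lambda^*)y_j=0,
\]
with the minimum attained at $y^*$. The objective is linear and separable, so it is minimized coordinatewise. Take $y_j=U_j$ when $c_j<\lambda^*$ and $y_j=L_j$ when $c_j>\lambda^*$. When $c_j=\lambda^*$ the coordinate has zero weight, so either endpoint works.

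\textbf{Step 2 (threshold form).} Define the vertex $\hat y$ by $\hat y_j=U_j$ if $c_j\le\lambda^*$ and $\hat y_j=L_j$ otherwise. By Step 1, $\hat y$ minimizes $\sum_j(c_j-\lambda^*)y_j$, so $\sum_j(c_j-\lambda^*)\hat y_j=0$ and therefore $R(\hat y)=\lambda^*$. After sorting $c$ ascending, the set $\{j: c_{(j)}\le\lambda^*\}$ is a prefix $\{1,\dots,m\}$. Ties within $c$ cause no trouble: any reindexing that keeps the sort order still places all coefficients $\le\lambda^*$ first. So $\hat y=y^{(m)}$ for this $m$, which gives $\min_m\tau_m\le\lambda^*$. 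The reverse inequality $\tau_m\ge\lambda^*$ holds for every $m$, because each $y^{(m)}\in\calB_y$. Together these give the equality. We expect this step to be the main obstacle, but only mildly: one must make sure that tie-breaking in the sort does not break the prefix structure, and that degenerate intervals with $L_j=U_j$ are harmless. They are harmless because both choices of endpoint coincide there.

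\textbf{Step 3 (complexity).} Sorting $c$ costs $O(K\log K)$. Write $N_m$ and $D_m$ for the numerator and denominator of \eqref{eq:tau_m}. Compute $N_0=\sum_j c_{(j)}L_{(j)}$ and $D_0=\sum_j L_{(j)}$ in $O(K)$. Then update
\[
N_m=N_{m-1}+c_{(m)}(U_{(m)}-L_{(m)}),\qquad D_m=D_{m-1}+U_{(m)}-L_{(m)}.
\]
Each $\tau_m=N_m/D_m$ costs $O(1)$, so the sweep over all $m$ is $O(K)$ and the total is $O(K\log K)$. For numerical stability one may subtract $\max_j u_j$ from all scores before exponentiating. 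This multiplies numerators and denominators by the same positive constant, so every $\tau_m$ is unchanged.
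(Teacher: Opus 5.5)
Your proof is correct and follows the paper's route. You use the Dinkelbach linearization at the optimal value $\lambda^*$, minimize $\sum_j(c_j-\lambda^*)y_j$ coordinatewise, and break ties toward $U_j$ to land on the threshold vertex with $m=|\{j:c_j\le\lambda^*\}|$, then finish with an $O(K)$ sweep after sorting. The only cosmetic differences are that you invoke Theorem~\ref{thm:vertex} where existence of a minimizer by compactness suffices, and you update $N_m,D_m$ incrementally rather than via separate prefix and suffix sums.
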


The intuition is that the optimizer trades off mass against cost: to minimize the weighted softmax, it maximizes the exponentiated scores on the coordinates with the smallest $c$ coefficients (making them as heavy as possible) and minimizes the scores on the most expensive coordinates. Because $c$ is sorted, the optimal split between ``heavy-cheap'' and ``light-expensive'' coordinates is always a contiguous threshold.

\begin{algorithm}[t]
\caption{Vertex-Softmax threshold solver}
\label{alg:threshold}
\begin{algorithmic}[1]
\STATE \textbf{Input:} coefficients $c$, score lower/upper bounds $\ell,u$
\STATE Sort indices so $c_{(1)}\le\cdots\le c_{(K)}$
\STATE Set $a=\max_j u_j$
\STATE Compute $L_j=\exp(\ell_j-a)$ and $U_j=\exp(u_j-a)$
\STATE Compute prefix sums of $U_{(j)}$ and $c_{(j)}U_{(j)}$
\STATE Compute suffix sums of $L_{(j)}$ and $c_{(j)}L_{(j)}$
\STATE Evaluate all ratios $\tau_m$ in~\eqref{eq:tau_m}
\STATE \textbf{return} $\min_m \tau_m$
\end{algorithmic}
\end{algorithm}

The stability shift in Algorithm~\ref{alg:threshold} rescales every numerator and denominator by the same positive factor, so it preserves all ratios and avoids overflow. A proof-producing interval-arithmetic evaluation path is described in Appendix~\ref{app:numerics}.

\begin{corollary}[Score-box optimality]
\label{thm:optimality}
Let $G(c,\ell,u)$ be any lower-bound procedure depending only on $c$ and the independent intervals. If
\begin{equation}
  G(c,\ell,u)\le F_c(s)\qquad\forall s\in\calB,
\end{equation}
then
\begin{equation}
  G(c,\ell,u)\le L_{\mathrm{box}}(c,\ell,u).
\end{equation}
Vertex-Softmax therefore exhausts the information available at the score-box interface: any tighter certificate must exploit score correlations, score--value coupling, or a reachable score set smaller than the independent box.
\end{corollary}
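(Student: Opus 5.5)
The plan is to observe that the minimum in~\eqref{eq:lbox} is attained, and then to evaluate the soundness hypothesis at the minimizer. First I establish attainment. The box $\calB$ is compact because all intervals are finite and nonempty. The map $F_c$ is continuous on $\R^K$, since its denominator $\sum_j e^{s_j}$ is strictly positive. Hence $L_{\mathrm{box}}(c,\ell,u)$ is a genuine minimum, attained at some $s^\star\in\calB$. Theorem~\ref{thm:vertex} also lets us take $s^\star$ to be a vertex. Theorem~\ref{thm:threshold} goes further and identifies it as the threshold vertex $y^{(m^\star)}$ with $m^\star\in\arg\min_m\tau_m$. This gives an explicit witness, but compactness alone already suffices.

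Next I instantiate the soundness hypothesis $G(c,\ell,u)\le F_c(s)$ at $s=s^\star$. This gives $G(c,\ell,u)\le F_c(s^\star)=L_{\mathrm{box}}(c,\ell,u)$, which is the claimed inequality.

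The interpretive sentence needs one more step. The procedure $G$ sees only $(c,\ell,u)$, so it must be sound for every reachable score set compatible with those intervals, and in particular for reachable set equal to the whole box. The point $s^\star$ is therefore a legitimate input, and the argument above applies. Consequently, a bound strictly exceeding $L_{\mathrm{box}}$ can only be valid when some extra information certifies that the reachable scores avoid a neighborhood of $s^\star$. Such information could be correlations between scores, coupling between scores and values, or a reachable set strictly smaller than $\calB$.

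The main obstacle is making precise what "depending only on $c$ and the intervals" means, rather than any computation. The inequality is then immediate, because the hypothesis is quantified over all $s\in\calB$. I would state explicitly that soundness is required against the full box, and note that the box is tight in the sense that it can be realized exactly as a reachable set (e.g.\ $s(x)=x$ with $\calX=\calB$). This confirms the bound is attained and cannot be improved without the additional structure listed in Corollary~\ref{thm:optimality}.
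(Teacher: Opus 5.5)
Your proof is correct and matches the paper's: the paper takes the infimum of the soundness inequality over $\calB$ and uses compactness and continuity to identify it with the attained minimum. It invokes Theorem~\ref{thm:threshold} only to equate that minimum with $\min_m\tau_m$, so instantiating the hypothesis at a minimizer $s^\star$ as you do is the same argument. Your realizability remark ($\calX=\calB$, $s(x)=x$) goes slightly beyond the paper's proof: it explains why soundness over the entire box is the forced hypothesis for any procedure that sees only $(c,\ell,u)$, which the paper leaves implicit.
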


\section{Vertex-CROWN Attention Bound}
\label{sec:vertex_crown}

The previous section provides an exact solver for a single softmax row given fixed coefficients and score intervals. To turn this into a working verifier, we need to extract those quantities from a CROWN/LiRPA backward pass and compose the row-level certificates into an end-to-end margin bound. Vertex-CROWN is the resulting interface. Figure~\ref{fig:vertex-softmax-pipeline} illustrates where the Vertex-Softmax primitive fits in the overall verification pipeline.

\begin{figure*}[t]
\centering
\begin{tikzpicture}[
    font=\normalsize,
    flow/.style={->, line width=0.6pt},
    stepbox/.style={
        draw=blue!70!black,
        rounded corners=1.5mm,
        line width=0.6pt,
        fill=white,
        minimum width=9.2cm,
        minimum height=0.8cm,
        text width=8.4cm,
        align=center,
        inner sep=2pt
    },
    keybox/.style={
        stepbox,
        fill=blue!8,
        line width=0.8pt,
        minimum height=0.9cm
    },
    badge/.style={
        circle,
        fill=blue!80!black,
        text=white,
        font=\normalsize\bfseries,
        minimum size=5.5mm,
        inner sep=0pt
    },
    callout/.style={
        draw=blue!70!black,
        dashed,
        rounded corners=1.5mm,
        line width=0.6pt,
        fill=white,
        align=center,
        text width=2.5cm,
        inner sep=3pt
    }
]

\node[stepbox] (s1)
{\textbf{Input perturbation set $\calX$}\\[0.5mm]
$x' \in [x-\epsilon,\; x+\epsilon]$};
\node[badge] at ([xshift=5mm]s1.west) {1};

\node[stepbox, below=2mm of s1] (s2)
{\textbf{Attention block affine bounds}\\[0.5mm]
pre-activations and intermediate bounds};
\node[badge] at ([xshift=5mm]s2.west) {2};

\node[stepbox, below=2mm of s2] (s3)
{\textbf{Per-row score boxes and}\\[-0.2mm]
\textbf{value-side coefficient bounds}\\[0.5mm]
$s_r \in [\ell_r,u_r],\; z_r^L \le z_r(x)$};
\node[badge] at ([xshift=5mm]s3.west) {3};

\node[keybox, below=2mm of s3] (s4)
{\textbf{Vertex-Softmax row solver}\\[0.5mm]
$\min/\max\; c^\top \softmax(s_r),\; s_r \in [\ell_r,u_r]$};
\node[badge] at ([xshift=5mm]s4.west) {4};

\node[callout, right=4mm of s4] (callout)
{Exact at fixed-coefficient\\ score-box interface};
\draw[blue!70!black, dashed, line width=0.6pt] (s4.east) -- (callout.west);

\node[stepbox, below=2mm of s4] (s5)
{\textbf{Certified attention row output bounds}\\[0.5mm]
tight rowwise lower/upper bounds};
\node[badge] at ([xshift=5mm]s5.west) {5};

\node[stepbox, below=2mm of s5] (s6)
{\textbf{Compose with remaining verifier}\\[0.5mm]
rows, heads, residual, and MLP bounds};
\node[badge] at ([xshift=5mm]s6.west) {6};

\node[stepbox, below=2mm of s6] (s7)
{\textbf{Final lower bound on class margin}\\[0.5mm]
$m_t(x') = f_y(x') - f_t(x')$;\; certified if $> 0$};
\node[badge] at ([xshift=5mm]s7.west) {7};

\draw[flow] (s1.south) -- (s2.north);
\draw[flow] (s2.south) -- (s3.north);
\draw[flow] (s3.south) -- (s4.north);
\draw[flow] (s4.south) -- (s5.north);
\draw[flow] (s5.south) -- (s6.north);
\draw[flow] (s6.south) -- (s7.north);

\end{tikzpicture}
\caption{
Integration of Vertex-Softmax into the verification pipeline. Input perturbations are propagated to independent score boxes and value-side coefficient lower bounds for each attention row. Vertex-Softmax computes exact directional softmax bounds at this rowwise score-box interface, which are then composed with the remaining verifier to obtain a certified lower bound on the final class margin.
}
\label{fig:vertex-softmax-pipeline}
\end{figure*}

Suppose a scalar margin satisfies
\begin{equation}
  m(x)\ge b+\sum_i\sum_{j=1}^K a_i(x)_j z_{ij}(x)
  \qquad \forall x\in\calX,
  \label{eq:vc_margin_interface}
\end{equation}
where $a_i(x)=\softmax(s_i(x))$. Assume sound score and value-side lower bounds
\begin{equation}
  \ell_{ij}\le s_{ij}(x)\le u_{ij},
  \qquad
  z_{ij}^L\le z_{ij}(x)
  \qquad \forall x\in\calX.
  \label{eq:vc_score_value_bounds}
\end{equation}
For proof references we also name the two parts separately:
\begin{equation}
  \ell_{ij}\le s_{ij}(x)\le u_{ij}
  \qquad \forall x\in\calX,
  \label{eq:vc_score_bounds}
\end{equation}
\begin{equation}
  z_{ij}^L\le z_{ij}(x)
  \qquad \forall x\in\calX.
  \label{eq:vc_value_bounds}
\end{equation}
Let $\calB_i=\prod_j[\ell_{ij},u_{ij}]$ and
\begin{equation}
  L_i=L_{\mathrm{box}}(z_i^L,\ell_i,u_i)
  =\min_{s_i\in\calB_i}\sum_{j=1}^K\softmax(s_i)_j z_{ij}^L.
  \label{eq:vc_row_bound}
\end{equation}
The layer bound is
\begin{equation}
  L_{\mathrm{VC}}=b+\sum_i L_i.
  \label{eq:vc_bound}
\end{equation}

\begin{algorithm}[t]
\caption{Vertex-CROWN attention bound}
\label{alg:vertex_crown}
\begin{algorithmic}[1]
\STATE \textbf{Input:} input set $\calX$, scalar margin $m$, attention row scores $s_i$
\STATE Use CROWN/LiRPA to obtain sound row score bounds $(\ell_i,u_i)$
\STATE Obtain lower value coefficients $z_i^L$ and affine lower term $b$ satisfying~\eqref{eq:vc_margin_interface}--\eqref{eq:vc_value_bounds}
\FOR{each query row $i$}
  \STATE $L_i \leftarrow L_{\mathrm{box}}(z_i^L,\ell_i,u_i)$ using Algorithm~\ref{alg:threshold}
\ENDFOR
\STATE \textbf{return} $L_{\mathrm{VC}}=b+\sum_i L_i$
\end{algorithmic}
\end{algorithm}

\begin{proposition}[Soundness of Vertex-CROWN]
\label{prop:vc_soundness}
Under~\eqref{eq:vc_margin_interface}, \eqref{eq:vc_score_bounds}, and~\eqref{eq:vc_value_bounds}, Algorithm~\ref{alg:vertex_crown} returns a sound lower bound:
\begin{equation}
  L_{\mathrm{VC}}\le m(x)\qquad \forall x\in\calX.
\end{equation}
\end{proposition}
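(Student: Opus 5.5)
The plan is to fix an arbitrary $x\in\calX$ and chain three inequalities: the margin interface, a value-side lower bound, and the score-box bound. This gives $L_{\mathrm{VC}}\le m(x)$ directly. Since $x$ is arbitrary, the claim follows. No property of the threshold solver beyond its exactness is needed; soundness only uses that $L_i$ is a lower bound over $\calB_i$.

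\textbf{Step 1 (value-side replacement).} Start from~\eqref{eq:vc_margin_interface}, $m(x)\ge b+\sum_i\sum_j a_i(x)_j z_{ij}(x)$. Softmax outputs are strictly positive, so $a_i(x)_j>0$. Combined with~\eqref{eq:vc_value_bounds}, this gives $a_i(x)_j z_{ij}(x)\ge a_i(x)_j z^L_{ij}$ termwise. Summing over $i,j$ yields
\begin{equation*}
  m(x)\ge b+\sum_i \sum_{j=1}^K \softmax(s_i(x))_j\, z^L_{ij}
  = b+\sum_i F_{z_i^L}(s_i(x)).
\end{equation*}
This is the step where care is needed: the nonnegativity of the attention weights is exactly what lets us drop to the lower value coefficients regardless of the sign of $z_{ij}$. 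The coefficient vector $z_i^L$ is fixed (independent of $x$), as required by the setup of~\eqref{eq:lbox}.

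\textbf{Step 2 (score-box minimization).} By~\eqref{eq:vc_score_bounds}, $s_i(x)\in\calB_i$. Hence $F_{z_i^L}(s_i(x))\ge \min_{s\in\calB_i}F_{z_i^L}(s)=L_{\mathrm{box}}(z_i^L,\ell_i,u_i)=L_i$ by definition~\eqref{eq:vc_row_bound}. By Theorem~\ref{thm:threshold}, the value computed by Algorithm~\ref{alg:threshold} (the minimum of the $\tau_m$, which is invariant under the stability shift) equals this $L_i$ exactly, so the algorithm's output is this minimum. Summing over rows gives $m(x)\ge b+\sum_i L_i=L_{\mathrm{VC}}$.

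The main obstacle is essentially nil mathematically. The only subtle points are the positivity argument in Step 1 and confirming that Algorithm~\ref{alg:threshold} returns the true minimum rather than an approximation. I would handle the latter by citing Theorem~\ref{thm:threshold} together with the remark that the shift by $a$ rescales numerator and denominator equally. Floating-point rounding would be deferred to the interval-arithmetic path of Appendix~\ref{app:numerics}.
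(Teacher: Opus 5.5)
Your proposal is correct and matches the paper's proof: feasibility of $s_i(x)$ in $\calB_i$ gives $L_i\le\sum_j\softmax(s_i(x))_j z^L_{ij}$, nonnegativity of the softmax weights lets you replace $z^L_{ij}$ by $z_{ij}(x)$, and summing over rows with the margin interface closes the chain. You run the chain from $m(x)$ downward while the paper runs it upward from $L_i$, which makes no difference, and your explicit appeal to Theorem~\ref{thm:threshold} (with shift invariance) to identify Algorithm~\ref{alg:threshold}'s output with the defining minimum fills in a step the paper leaves implicit.
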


In practice, if $L_{\mathrm{VC}}>0$, the model's prediction is provably unchanged under every perturbation in~$\calX$. Tighter lower bounds certify more inputs and give a more informative picture of the model's robustness.

\paragraph{Self-attention and suffix instantiation.}
For the ViT-style blocks in the experiments, token embeddings $H(x)$ produce affine queries, keys, and values
\begin{equation}
  Q_i^h=W_Q^hH_i+b_Q^h,
  \qquad
  K_j^h=W_K^hH_j+b_K^h,
  \qquad
  V_j^h=W_V^hH_j+b_V^h,
\end{equation}
with scores
\begin{equation}
  s_{ij}^h(x)=d_h^{-1/2}\langle Q_i^h(x),K_j^h(x)\rangle+\mu_{ij}^h.
\end{equation}
The score intervals can be obtained by CROWN/McCormick bounds on this scalar score module, or by the interval-product fallback in Appendix~\ref{app:vc_details}.

For a target margin, let $H_i^+(x)=H_i(x)+b_O+\sum_h W_O^h\sum_j a_{ij}^h(x)V_j^h(x)$ be the post-attention residual state. If the downstream suffix gives
\begin{equation}
  m_t(x)\ge \beta_t+\sum_i\gamma_{ti}^{\top}H_i^+(x),
\end{equation}
then the induced row coefficients are lower bounds of $(W_O^h)^\top\gamma_{ti}$ contracted with $V_j^h(x)$, and the Vertex path has form
\begin{equation}
  L_{\mathrm{VC},t}=b_t'+\sum_{h,i}L_{\mathrm{box}}(c_{tih:},\ell_{ih:},u_{ih:}).
\end{equation}
The target-wise hybrid uses
\begin{equation}
  L_{\mathrm{hyb},t}=\max\{L_{\mathrm{CROWN},t},L_{\mathrm{VC},t}\},
  \label{eq:hybrid_bound}
\end{equation}
which is sound because both terms lower-bound the same target margin. The two paths complement each other: direct CROWN propagates tighter affine relaxations through the full computation graph, while Vertex-CROWN solves the softmax subproblem exactly but relies on looser value-side and suffix bounds. Taking the per-target maximum captures the best of both without sacrificing soundness. Details, coefficient formulas, and the per-target algorithm are in Appendix~\ref{app:vc_details}. The oracle cost is $O(BTHRK\log K)$ for batch size $B$, targets $T$, heads $H$, query rows $R$, and keys $K$, excluding score/value/suffix bound construction.

\section{Experiments}
\label{sec:experiments}

The experiments test whether the exact score-box primitive sharpens attention certificates at the verifier interface. We progress from validating the solver itself, to scalable softmax comparisons on synthetic score boxes, to real-data patch-attention models, to full attention--residual--MLP blocks, and finally to comparisons against stronger optimized and search-based baselines.

Certified-rate denominators differ by benchmark. Synthetic rows are computed over all generated trials. Patch-attention rows in Table~\ref{tab:mnist_patch_attention} are conditional on clean-correct examples. Full-block rows in Table~\ref{tab:full_mlp_nonmnist} are end-to-end certified accuracies over the listed evaluation prefixes, with clean-incorrect examples counted as uncertified. The exactness claim applies only to the fixed-coefficient independent score-box subproblem, not to end-to-end transformer verification. The reported lower bound is the mean margin lower bound; and the gap is best attack margin minus certified lower bound. All reported verifier tables use standard floating-point PyTorch/auto\_LiRPA unless explicitly marked otherwise; the mathematical soundness statements are real-arithmetic statements, with interval-oracle checks in Appendix~\ref{app:numerics}. Protocol and numerical-status records are in Appendix Tables~\ref{tab:protocol_app} and~\ref{tab:numerical_status_app}.

\paragraph{Solver and scalable softmax checks.}
The threshold solver agrees with exhaustive vertex enumeration up to numerical tolerance for $K\le16$ and remains practical at $K=512$; Appendix Table~\ref{tab:threshold_runtime_app} and Figure~\ref{fig:threshold_runtime_app} give the timing details. Table~\ref{tab:scalable_sweep} shows the main scalable-softmax comparison. Vertex-Softmax gives the best mean lower bound and smallest attack gap for every tested $K\in\{4,8,16,32,64,128\}$, while also being $3$--$24\times$ faster than Wei-LSE (e.g., $0.08$s vs.\ $2.0$s at $K=128$).

\begin{figure*}[t]
  \centering
  \includegraphics[width=0.95\textwidth]{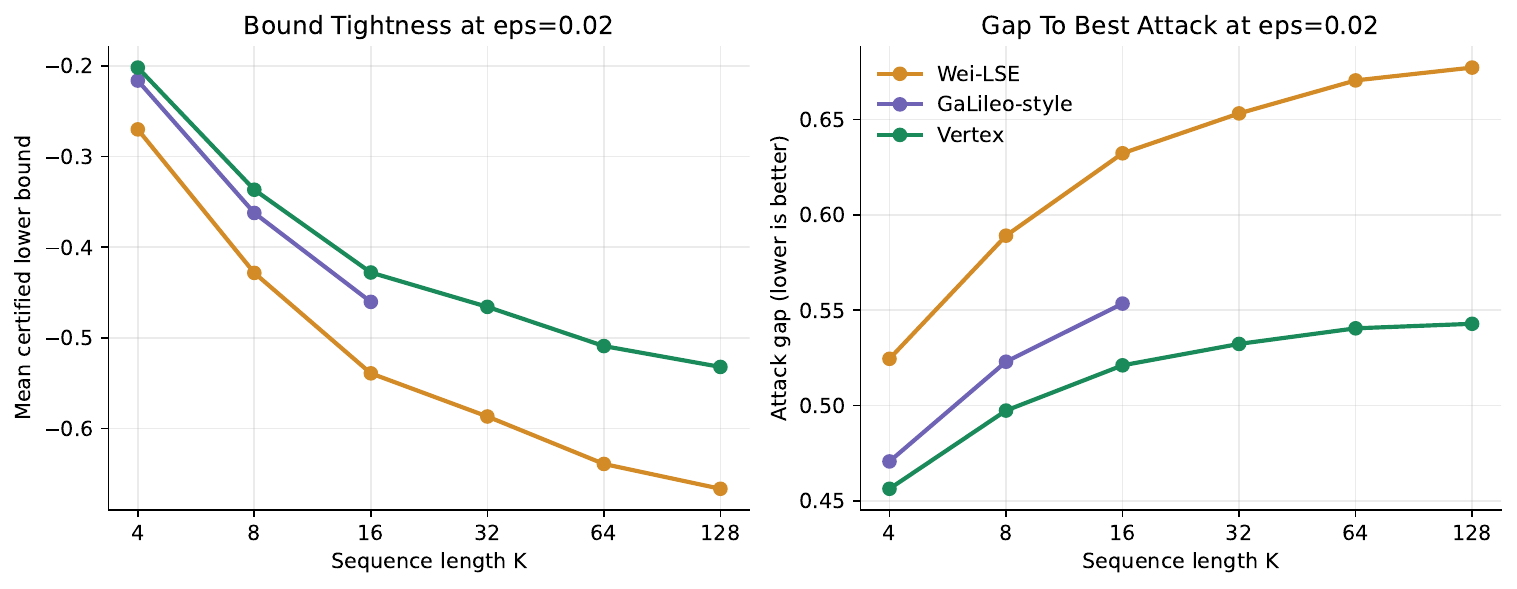}
  \caption{Scalable attention sweep at $\epsilon=0.02$. Vertex-Softmax gives the highest mean certified lower bound and the smallest gap to the best attack across the tested sequence lengths. GaLileo-style implementation baseline results for $K\le16$ are in Appendix Table~\ref{tab:galileo_baseline_app}.}
  \label{fig:scalable_sweep}
\end{figure*}

\begin{table*}[t]
\centering
\setlength{\tabcolsep}{2pt}
\caption{Scalable sweep at $\epsilon=0.02$ over three seeds and 600 total trials per row. Vertex-Softmax gives the best mean lower bound and smallest attack gap at every tested $K$. Runtime is aggregate wall time. An independent GaLileo-style implementation baseline for $K\le16$ is in Appendix Table~\ref{tab:galileo_baseline_app} (the original GaLileo code is not publicly available).}
\label{tab:scalable_sweep}
\begin{tabular}{rlrrrr}
\toprule
$K$ & Method & Cert. rate & Mean lower & Gap & Time (s) \\
\midrule
4   & Wei-LSE       & 0.128 & -0.2702 & 0.5245 & 0.1080 \\
4   & Vertex        & 0.180 & -0.2020 & 0.4563 & 0.0298 \\
8   & Wei-LSE       & 0.023 & -0.4284 & 0.5891 & 0.0645 \\
8   & Vertex        & 0.053 & -0.3367 & 0.4973 & 0.0299 \\
16  & Wei-LSE       & 0.000 & -0.5392 & 0.6323 & 0.1012 \\
16  & Vertex        & 0.007 & -0.4280 & 0.5211 & 0.0302 \\
32  & Wei-LSE       & 0.000 & -0.5868 & 0.6532 & 0.8202 \\
32  & Vertex        & 0.000 & -0.4659 & 0.5323 & 0.1391 \\
64  & Wei-LSE       & 0.000 & -0.6391 & 0.6705 & 1.4836 \\
64  & Vertex        & 0.000 & -0.5091 & 0.5405 & 0.1133 \\
128 & Wei-LSE       & 0.000 & -0.6665 & 0.6772 & 2.0094 \\
128 & Vertex        & 0.000 & -0.5321 & 0.5428 & 0.0838 \\
\bottomrule
\end{tabular}
\end{table*}

\paragraph{Image attention certificates.}
Table~\ref{tab:mnist_patch_attention} evaluates small real-data patch-attention classifiers. The strongest signal appears in high-uncertainty regimes: at $\epsilon=0.03$, Vertex-CROWN raises binary MNIST certification from $20.3\%$ to $77.8\%$ with 16 tokens and from $46.5\%$ to $92.5\%$ with 49 tokens. These are regimes where standard CROWN leaves most inputs uncertified; replacing the softmax relaxation with the exact solver recovers the majority of them. The 10-class setting is harder, but Vertex-CROWN still improves both certified rate and mean lower bound at $\epsilon\in\{0.02,0.03\}$; Appendix Table~\ref{tab:bootstrap_ci_app} reports seed-resampling stability checks for the main rate improvements.

\begin{table*}[t]
\centering
\setlength{\tabcolsep}{3pt}
\caption{Real-data MNIST patch-attention certificates. Binary rows use classes $0$ vs.~$1$; 10-class rows use all MNIST classes. The 16-token model uses $7\times7$ patches, and the 49-token model uses $4\times4$ patches. Certification rates are conditional on clean-correct examples: verification is run on up to the protocol's listed $N$ correctly classified test examples per seed, so these rates are not end-to-end certified accuracies. Results are averaged over three seeds for the 16-token rows and two seeds for the 49-token row.}
\label{tab:mnist_patch_attention}
\begin{tabular}{llrrrr}
\toprule
Benchmark & $\epsilon$ & CROWN cert. & Vertex-CROWN cert. & CROWN lower & Vertex lower \\
\midrule
Binary, 16 tokens & 0.02 & 0.738 & 0.916 &  1.915 &  4.552 \\
Binary, 16 tokens & 0.03 & 0.203 & 0.778 & -8.263 &  2.426 \\
Binary, 16 tokens & 0.05 & 0.002 & 0.260 & -74.505 & -2.142 \\
Binary, 49 tokens & 0.02 & 0.840 & 0.972 &  3.788 &  6.482 \\
Binary, 49 tokens & 0.03 & 0.465 & 0.925 & -3.354 &  4.934 \\
10-class, 16 tokens & 0.02 & 0.033 & 0.129 & -8.931 & -2.895 \\
10-class, 16 tokens & 0.03 & 0.001 & 0.012 & -30.785 & -7.375 \\
\bottomrule
\end{tabular}
\end{table*}

\paragraph{Full attention--residual--MLP blocks.}
Table~\ref{tab:full_mlp_nonmnist} tests whether the primitive's gains survive a nonlinear suffix. The CROWN-suffix hybrid improves Fashion-MNIST end-to-end certified rates and mean lower bounds across the robustness curve: at $\epsilon=0.02$ it raises certification from $5.6\%$ to $14.0\%$ and improves the mean lower bound from $-7.784$ to $-1.382$; at $\epsilon=0.03$ the mean lower bound improves by over $33$ points. CIFAR-10 grayscale, included as a low-accuracy stress setting, shows the same tightness direction. Appendix Tables~\ref{tab:paired_nonmnist_app}, \ref{tab:bootstrap_ci_app}, and~\ref{tab:runtime_breakdown_app} give paired deltas, seed-resampling stability checks, and runtime accounting. The sort/sweep oracle is negligible relative to score, value, and suffix-bound construction.

\paragraph{Stronger baselines.}
\begin{figure*}[t]
  \centering
  \includegraphics[width=0.95\textwidth]{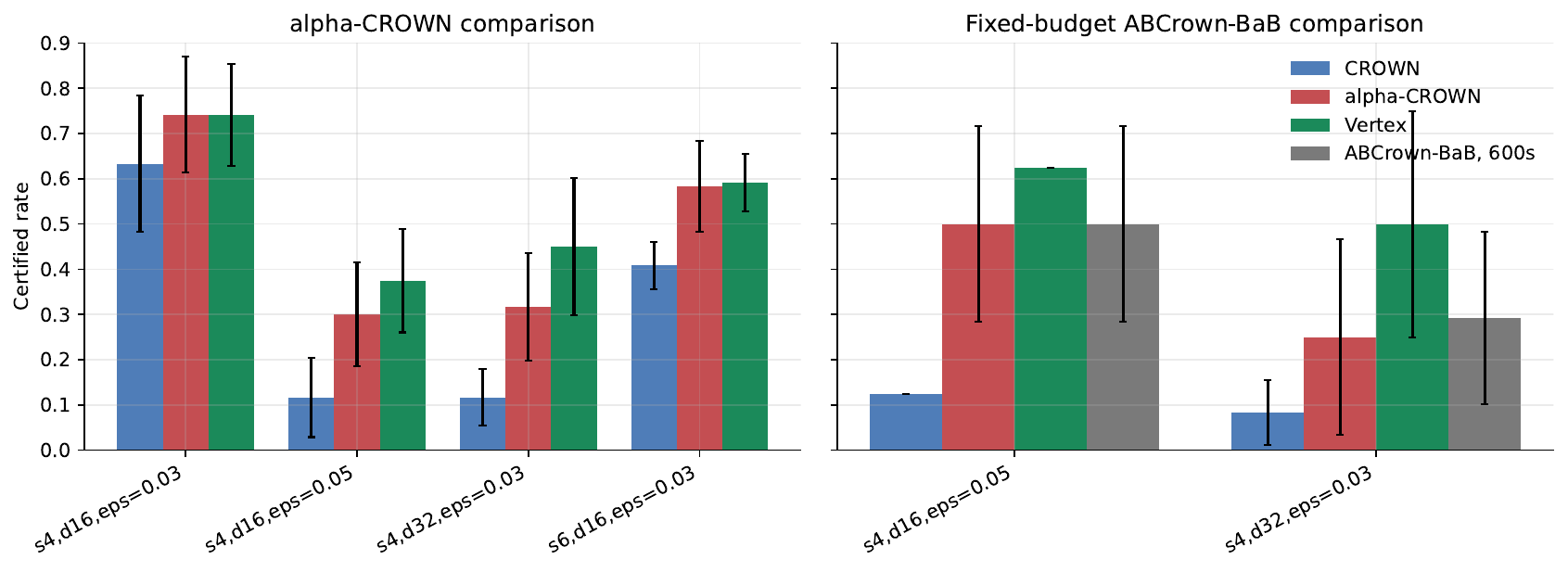}
  \caption{Comparison against alpha-CROWN and fixed-budget ABCrown-BaB on selected small attention blocks. Vertex-CROWN is competitive with alpha-CROWN across settings and outperforms it in higher-epsilon and higher-dimensional configurations, while running $30$--$100\times$ faster. ABCrown-BaB uses a 600s per-instance budget; unknown, timeout, OOM, and internal-error outcomes are counted as uncertified.}
  \label{fig:strong_baselines}
\end{figure*}

Against alpha-CROWN and fixed-budget ABCrown-BaB on selected small attention blocks, Vertex-CROWN matches or exceeds alpha-CROWN at $K=4,d=16,\epsilon=0.03$ and $K=6,d=16,\epsilon=0.03$, outperforms it in higher-epsilon and higher-dimensional $K=4$ settings, and trails it on the residual block at $\epsilon=0.03$ where optimizable slopes have more room to help. With a 600 second ABCrown-BaB budget, Vertex-CROWN certifies $62.5\%$ versus $50.0\%$ on $K=4,d=16,\epsilon=0.05$ and $50.0\%$ versus $29.2\%$ on $K=4,d=32,\epsilon=0.03$. Notably, Vertex-CROWN achieves these results in $0.03$--$0.07$s per trial, compared to $1.5$--$3.7$s for alpha-CROWN and $268$--$395$s for ABCrown-BaB. Figure~\ref{fig:strong_baselines} and Appendix Tables~\ref{tab:paired_strong_baseline_app} and~\ref{tab:selected_strong_baselines_app} give paired, aggregate, and fixed-budget details.

\paragraph{Practical guidance.}
Vertex-CROWN is most effective when the softmax relaxation is the binding source of certificate looseness: high-uncertainty regimes with wider score boxes, settings where alpha-CROWN's slope optimization has limited room, and architectures where the attention layer dominates the verification gap. When suffix or value-bound looseness dominates instead, the hybrid strategy can still recover the better of the two paths, but full-block use incurs the substantial bound-construction cost quantified in Appendix Table~\ref{tab:runtime_breakdown_app}.

\begin{table*}[t]
\centering
\setlength{\tabcolsep}{2pt}
\caption{Full attention--residual--MLP block results. Cert. entries are end-to-end certified accuracies, reported as CROWN / Hybrid over the first 100 evaluation examples per seed with clean-incorrect examples counted as uncertified. Lower columns show CROWN / Hybrid mean margin lower bounds on the clean-correct examples from the same evaluation prefix. $\Delta$ is the paired mean lower-bound improvement of Hybrid over CROWN. All rows use five seeds.}
\label{tab:full_mlp_nonmnist}
\begin{tabular}{llrrrrr}
\toprule
Dataset & Setting & $\epsilon$ & Clean & Cert. & Lower & $\Delta$ \\
\midrule
Fashion-MNIST & $d=32,h=4,m=64$ & 0.01  & 0.708 & 0.456 / 0.484 &   1.153 /   1.368 &  0.215 \\
Fashion-MNIST & $d=32,h=4,m=64$ & 0.02  & 0.708 & 0.056 / 0.140 &  -7.784 /  -1.382 &  6.402 \\
Fashion-MNIST & $d=32,h=4,m=64$ & 0.03  & 0.708 & 0.000 / 0.058 & -38.191 /  -4.583 & 33.608 \\
Fashion-MNIST & $d=16,h=2,m=64$ & 0.02  & 0.672 & 0.096 / 0.252 &  -5.228 /  -0.620 &  4.608 \\
CIFAR-10 gray & $d=32,h=4,m=64$ & 0.005 & 0.346 & 0.126 / 0.150 &  -0.280 /  -0.098 &  0.182 \\
CIFAR-10 gray & $d=32,h=4,m=64$ & 0.01  & 0.346 & 0.004 / 0.026 &  -6.041 /  -1.520 &  4.521 \\
\bottomrule
\end{tabular}
\end{table*}

\section{Limitations}
\label{sec:limitations}

Vertex-Softmax eliminates one source of looseness, the softmax relaxation inside the score-box subproblem, but the full verification pipeline has several other sources of slack that the primitive does not address. Understanding these is important both for interpreting the experimental results and for identifying where future work can have the most impact.

Vertex-Softmax is exact only after the verifier has reduced an attention row to fixed coefficients and independent score intervals. The remaining looseness comes from the surrounding verifier, not from the weighted-softmax box oracle. First, independent score boxes discard score correlations; if those correlations rule out the worst threshold vertex, exact box optimization is still conservative for the reachable set. Second, value lower bounds discard score--value coupling; nonnegative softmax weights preserve soundness, but weak value bounds can dominate the final certificate. Third, row-wise minimization can combine worst cases that no single input attains simultaneously. Fourth, nonlinear suffixes require additional relaxations, so the CROWN-suffix hybrid in Table~\ref{tab:full_mlp_nonmnist} is tighter but slower than direct CROWN and still depends on suffix-bound quality. The experiments are one-block ViT-style models rather than deep transformers with layer normalization and many interacting blocks. Appendix Table~\ref{tab:slack_decomposition_app} gives controlled slack-decomposition diagnostics, and Appendix Table~\ref{tab:runtime_breakdown_app} separates oracle cost from surrounding bound-construction cost.

\section{Conclusion}

Vertex-Softmax exactly solves $\min_{s\in\calB}c^\top\softmax(s)$ at the independent score-box interface. The optimum is attained by one of $K+1$ sorted threshold vertices, giving an exact $O(K\log K)$ primitive that replaces objective-agnostic softmax relaxations with a direction-aware row solver. Integrated into Vertex-CROWN, it produces large certified-rate improvements on patch-attention and full attention--residual--MLP models, and is competitive with or improves upon alpha-CROWN and branch-and-bound baselines on selected attention blocks.

The threshold structure also clarifies where further progress must come from. Because Vertex-Softmax exhausts the information at the independent score-box interface, tighter end-to-end certificates require exploiting score correlations, score--value coupling, joint row composition, or tighter nonlinear suffix bounds. Extending the exact-optimization perspective to these richer interfaces, for instance solving correlated score polytopes or jointly optimizing over scores and values, is a natural next step toward closing the gap between local softmax exactness and global verifier tightness.

\clearpage
\bibliographystyle{plainnat}
\bibliography{references}

\appendix

\section{Extended Related Work}
\label{app:related_work}

At LLM scale, the verification literature shifts toward different kinds of guarantees. Erase-and-Check certifies wrapper behavior against bounded adversarial prompting~\citep{kumar2024certifying}; distributional and conformal approaches give statistical guarantees over prompt distributions~\citep{chaudhary2025counterfactual,cherian2024conformal}; empirical suites such as PromptBench and SafetyBench search for failures but are not sound certificates~\citep{zhu2024promptbench,zhang2024safetybench}; and runtime monitors such as AgentSpec, RvLLM, and ProbGuard enforce explicit trace or action predicates during deployment~\citep{wang2025agentspec,zhang2025rvllm,wang2025probguard}. These approaches are practically important for frontier LLM systems, but they certify wrappers, distributions, tests, or traces rather than the worst-case white-box semantics of the transformer computation graph. Vertex-Softmax is complementary to all of these: it contributes a local exact primitive at the score-box interface, not an end-to-end system for any of the above guarantee types.

For transformers specifically, the exact MIQCP result of \citet{liao2023sparsemax} applies to sparsemax attention rather than standard softmax attention. Set-based reachability methods~\citep{tran2020nnv} can provide complete answers but have not yet been demonstrated at dense softmax scale.

\section{Proof Details}
\label{app:proofs}

This appendix proves Theorems~\ref{thm:vertex} and~\ref{thm:threshold}, derives Corollary~\ref{thm:optimality} from them, and finally establishes the soundness statement in Proposition~\ref{prop:vc_soundness}. Throughout we use the change of variables $y_j=e^{s_j}$, $L_j=e^{\ell_j}$, $U_j=e^{u_j}$, so that $F_c(s)=R(y)$ with $R$ as in~\eqref{eq:R_def}, and $y$ ranges over the positive box $\calB_y=\prod_{j=1}^K[L_j,U_j]\subset(0,\infty)^K$. Because the map $s\mapsto y$ is a coordinatewise bijection that sends vertices of $\calB$ to vertices of $\calB_y$, every statement below transfers to $F_c$ on $\calB$ without change.

\subsection{Proof of Theorem~\ref{thm:vertex}}

We show that $R$ attains its minimum over $\calB_y$ at a vertex of $\calB_y$. The maximum statement follows by applying the same argument with $c$ replaced by $-c$.

\begin{proof}[Proof of Theorem~\ref{thm:vertex}]
\emph{Step 1: one-dimensional reduction.}
If $K=1$, then $R(y)=c_1$ is constant on $\calB_y$, so every point is a minimizer; in particular, both endpoints, which are vertices, are minimizers. Thus the claim is immediate. Assume $K\ge 2$.

Fix any index $i\in\{1,\ldots,K\}$ and any choice of values $y_j\in[L_j,U_j]$ for $j\ne i$. Write
\begin{equation}
  A=\sum_{j\ne i}c_jy_j,\qquad
  D=\sum_{j\ne i}y_j.
\end{equation}
Since $y_j>0$ for every $j$, we have $D>0$. Along the $i$th coordinate, $R$ takes the one-dimensional form
\begin{equation}
  \phi_i(y_i)=\frac{c_iy_i+A}{y_i+D},
  \qquad y_i\in[L_i,U_i].
  \label{eq:app_phi_i}
\end{equation}
Differentiating,
\begin{equation}
  \phi_i'(y_i)
  =
  \frac{c_i(y_i+D)-(c_iy_i+A)}{(y_i+D)^2}
  =
  \frac{c_iD-A}{(y_i+D)^2}.
\end{equation}
The denominator is strictly positive and the numerator is independent of $y_i$, so $\phi_i$ is either monotone on $[L_i,U_i]$ or constant there. In every case, at least one of $L_i$ or $U_i$ attains $\min_{[L_i,U_i]}\phi_i$.

\emph{Step 2: coordinate-by-coordinate pushing.}
Since $R$ is continuous and $\calB_y$ is compact, the minimum $R^\star=\min_{\calB_y}R$ is attained. Pick any minimizer $y^\star\in\calB_y$ and define a finite sequence $y^{(0)},y^{(1)},\ldots,y^{(K)}$ by $y^{(0)}=y^\star$ and, for $i=1,\ldots,K$, choose $y^{(i)}_i$ to be an endpoint of $[L_i,U_i]$ that minimizes~\eqref{eq:app_phi_i} with all other coordinates fixed at their values in $y^{(i-1)}$, setting $y^{(i)}_j=y^{(i-1)}_j$ for $j\ne i$. By Step~1, each replacement satisfies $R(y^{(i)})\le R(y^{(i-1)})$. Iterating,
\begin{equation}
  R^\star\le R(y^{(K)})\le R(y^{(0)})=R^\star,
\end{equation}
so $R(y^{(K)})=R^\star$. By construction $y^{(K)}_j\in\{L_j,U_j\}$ for every $j$, and therefore $y^{(K)}$ is a vertex of $\calB_y$. Pulling back through $s\mapsto y$ yields a vertex of $\calB$ attaining the minimum of $F_c$.
\end{proof}

Degenerate intervals $\ell_j=u_j$ require no modification: the corresponding coordinate is forced to its common endpoint, which is still a vertex.

\subsection{Proof of Theorem~\ref{thm:threshold}}

By Theorem~\ref{thm:vertex}, it suffices to minimize $R$ over the $2^K$ vertices of $\calB_y$. We show that at least one of those minimizers has threshold form in the sense of~\eqref{eq:threshold_pattern}, which narrows attention to the $K+1$ candidates $y^{(0)},\ldots,y^{(K)}$.

Reindex so that $c_1\le c_2\le\cdots\le c_K$, and denote the associated bounds by $L_j,U_j$. This is the sorted order $c_{(j)}$, $L_{(j)}$, $U_{(j)}$ used in the main text, and the value of $R$ is invariant under this reindexing.

\begin{proof}[Proof of Theorem~\ref{thm:threshold}]
\emph{Step 1: linearization at the optimum.}
For every $\rho\in\R$ and every $y\in\calB_y$, the common denominator $S(y)=\sum_j y_j$ is strictly positive, and
\begin{equation}
  R(y)-\rho
  =
  \frac{\sum_{j=1}^K(c_j-\rho)y_j}{S(y)}.
  \label{eq:app_dinkelbach}
\end{equation}
Let $y^\star\in\calB_y$ be any minimizer of $R$ and set $\rho=R(y^\star)$. Then~\eqref{eq:app_dinkelbach} at $y^\star$ gives $\sum_j(c_j-\rho)y^\star_j=0$. For every $y\in\calB_y$, minimality of $y^\star$ gives $R(y)\ge\rho$, so
\begin{equation}
  \sum_{j=1}^K(c_j-\rho)y_j
  \ge
  0
  =
  \sum_{j=1}^K(c_j-\rho)y^\star_j.
  \label{eq:app_linear}
\end{equation}
Thus $y^\star$ minimizes the linear functional $y\mapsto\sum_j(c_j-\rho)y_j$ over $\calB_y$.

\emph{Step 2: a linear minimizer has threshold form.}
The linear problem in~\eqref{eq:app_linear} separates across coordinates. For each $j$,
\begin{equation}
  y^\star_j\in
  \operatorname*{arg\,min}_{y_j\in[L_j,U_j]}(c_j-\rho)y_j,
\end{equation}
and so necessarily
\begin{equation}
  y^\star_j=
  \begin{cases}
    U_j, & c_j<\rho,\\
    L_j, & c_j>\rho,\\
    \text{any point in }[L_j,U_j], & c_j=\rho.
  \end{cases}
  \label{eq:app_coord_rule}
\end{equation}
Define $m^-=|\{j:c_j<\rho\}|$ and $m^+=|\{j:c_j\le\rho\}|$. Since the coefficients are sorted, $c_j<\rho$ exactly for $j\le m^-$ and $c_j>\rho$ exactly for $j>m^+$. On the tied block $\{m^-+1,\ldots,m^+\}$ the linear functional contributes zero, so replacing $y^\star$ on that block by any other feasible values preserves $\sum_j(c_j-\rho)y_j=0$ and hence preserves $R(y)=\rho$ by~\eqref{eq:app_dinkelbach}. In particular, define a threshold vertex $\tilde y$ by
\begin{equation}
  \tilde y_j=
  \begin{cases}
    U_j, & j\le m^+,\\
    L_j, & j>m^+.
  \end{cases}
\end{equation}
Then $\tilde y$ is exactly the threshold pattern~\eqref{eq:threshold_pattern} with $m=m^+$, and $R(\tilde y)=R(y^\star)=\min_{\calB_y}R$.

\emph{Step 3: $K+1$ candidates suffice.}
By Step~2, some index $m\in\{0,1,\ldots,K\}$ satisfies $R(y^{(m)})=\min_{\calB_y}R$. Therefore
\begin{align}
  \min_{s\in\calB}F_c(s)
  &=
  \min_{y\in\calB_y}R(y) \nonumber\\
  &=
  \min_{m=0,\ldots,K}R\bigl(y^{(m)}\bigr)
  =
  \min_{m=0,\ldots,K}\tau_m,
\end{align}
which is the claimed exactness identity.

\emph{Step 4: complexity.}
Sorting $c$ takes $O(K\log K)$. With the sorted order fixed, a left-to-right pass computes prefix sums of $U_{(j)}$ and $c_{(j)}U_{(j)}$, and a right-to-left pass computes suffix sums of $L_{(j)}$ and $c_{(j)}L_{(j)}$, each in $O(K)$ time. Given these four arrays, every $\tau_m$ in~\eqref{eq:tau_m} is a ratio of two $O(1)$ lookups, so the full sweep over $m=0,\ldots,K$ costs $O(K)$. The total is therefore $O(K\log K)$, dominated by the sort.
\end{proof}

For numerical stability, replacing $(\ell_j,u_j)$ by $(\ell_j-a,u_j-a)$ rescales every $y_j$ by $e^{-a}$, and therefore scales both the numerator and the denominator of every $\tau_m$ by the same factor. Choosing $a=\max_j u_j$ enforces $y_j\in(0,1]$ for all $j$ and prevents overflow.

\subsection{Proof of Corollary~\ref{thm:optimality}}

\begin{proof}[Proof of Corollary~\ref{thm:optimality}]
Let $G$ be any sound score-box-only lower-bound procedure. By soundness,
\begin{equation}
  G(c,\ell,u)\le F_c(s)\qquad\forall\,s\in\calB.
\end{equation}
The right-hand side does not depend on $G$, so we may take the infimum over $s\in\calB$:
\begin{equation}
  G(c,\ell,u)\le \inf_{s\in\calB}F_c(s).
\end{equation}
The box $\calB$ is compact and $F_c$ is continuous, so the infimum is attained and equals $\min_{s\in\calB}F_c(s)$. By Theorem~\ref{thm:threshold},
\begin{equation}
  \min_{s\in\calB}F_c(s)=\min_{m=0,\ldots,K}\tau_m=L_{\mathrm{box}}(c,\ell,u).
\end{equation}
Combining the displays gives $G(c,\ell,u)\le L_{\mathrm{box}}(c,\ell,u)$.
\end{proof}

Algorithm~\ref{alg:threshold} is itself a concrete sound score-box-only procedure whose output equals $L_{\mathrm{box}}(c,\ell,u)$, so the inequality in Corollary~\ref{thm:optimality} is attained. Among all sound procedures whose only inputs are $c$ and the independent intervals, Vertex-Softmax is therefore pointwise optimal at every $(c,\ell,u)$, not merely worst-case optimal.

\subsection{Proof of Proposition~\ref{prop:vc_soundness}}

\begin{proof}[Proof of Proposition~\ref{prop:vc_soundness}]
Fix an arbitrary $x\in\calX$ and a row $i$. By~\eqref{eq:vc_score_bounds}, $s_i(x)\in\calB_i$, so $s_i(x)$ is feasible for the minimization defining $L_i$. Therefore
\begin{equation}
  L_i
  \le
  \sum_{j=1}^K\softmax(s_i(x))_j z_{ij}^L.
  \label{eq:app_li_feasible}
\end{equation}
The entries of $\softmax(s_i(x))$ are nonnegative, and by~\eqref{eq:vc_value_bounds} we have $z_{ij}^L\le z_{ij}(x)$ for every $j$. Multiplying termwise by the softmax weights and summing preserves the inequality:
\begin{equation}
  \sum_{j=1}^K\softmax(s_i(x))_j z_{ij}^L
  \le
  \sum_{j=1}^K\softmax(s_i(x))_j z_{ij}(x).
  \label{eq:app_value_lb}
\end{equation}
Combining~\eqref{eq:app_li_feasible} and~\eqref{eq:app_value_lb}, $L_i$ lower-bounds the signed row contribution at input $x$. Summing over rows then gives
\begin{align}
  L_{\mathrm{VC}}
  &=
  b+\sum_i L_i \nonumber\\
  &\le
  b+\sum_i\sum_{j=1}^K\softmax(s_i(x))_j z_{ij}(x)
  \le m(x),
\end{align}
where the last inequality is exactly the margin interface~\eqref{eq:vc_margin_interface}. Since $x\in\calX$ was arbitrary, $L_{\mathrm{VC}}\le m(x)$ throughout $\calX$. If $L_{\mathrm{VC}}>0$, then $m(x)>0$ for all $x\in\calX$, certifying the margin.
\end{proof}

\section{Vertex-CROWN Construction Details}
\label{app:vc_details}

For the architectures used in our experiments, the input image is first mapped to token embeddings $H(x)\in\R^{R\times d}$. For head $h$, the queries, keys, and values are affine maps
\begin{equation}
\begin{aligned}
  Q_i^h(x)&=W_Q^h H_i(x)+b_Q^h,\\
  K_j^h(x)&=W_K^h H_j(x)+b_K^h,\\
  V_j^h(x)&=W_V^h H_j(x)+b_V^h.
\end{aligned}
\end{equation}
The attention score is
\begin{equation}
  s_{ij}^h(x)=\frac{1}{\sqrt{d_h}}\langle Q_i^h(x),K_j^h(x)\rangle+\mu_{ij}^h,
  \label{eq:attention_score_def}
\end{equation}
where $\mu_{ij}^h$ is a fixed mask or positional score term. The experiments compute $[\ell_{ij}^h,u_{ij}^h]$ by applying auto\_LiRPA/CROWN to~\eqref{eq:attention_score_def}. A simple fallback is interval product bounding: from scalar bounds $q_{ir}^h\in[\underline q_{ir}^h,\overline q_{ir}^h]$ and $k_{jr}^h\in[\underline k_{jr}^h,\overline k_{jr}^h]$, set
\begin{align}
  \underline p_{ijr}^h
  &=\min\{\underline q\,\underline k,\underline q\,\overline k,
        \overline q\,\underline k,\overline q\,\overline k\},\\
  \overline p_{ijr}^h
  &=\max\{\underline q\,\underline k,\underline q\,\overline k,
        \overline q\,\underline k,\overline q\,\overline k\},
\end{align}
where the four products use the corresponding $(i,j,r,h)$ bounds. Then
\begin{equation}
  \ell_{ij}^h=\mu_{ij}^h+\frac{1}{\sqrt{d_h}}\sum_r\underline p_{ijr}^h,
  \qquad
  u_{ij}^h=\mu_{ij}^h+\frac{1}{\sqrt{d_h}}\sum_r\overline p_{ijr}^h
  \label{eq:score_box_ibp}
\end{equation}
is sound. CROWN/McCormick multiplication relaxations can be substituted; Vertex-Softmax only needs the certified scalar intervals.

Let the attention-residual state after the output projection be
\begin{equation}
\begin{aligned}
  H_i^+(x)&=H_i(x)+b_O+\sum_h W_O^h O_i^h(x),\\
  O_i^h(x)&=\sum_j a_{ij}^h(x)V_j^h(x).
\end{aligned}
  \label{eq:attention_residual_state}
\end{equation}
For a target margin $m_t(x)$, suppose the downstream classifier or CROWN suffix gives an affine lower bound
\begin{equation}
  m_t(x)\ge \beta_t+\sum_i \gamma_{ti}^{\top}H_i^+(x)
  \qquad \forall x\in\calX.
  \label{eq:suffix_affine_bound}
\end{equation}
Substituting~\eqref{eq:attention_residual_state} produces attention coefficients
\begin{equation}
  \eta_{tih}=(W_O^h)^{\top}\gamma_{ti},
  \qquad
  c_{tihj}\le \eta_{tih}^{\top}V_j^h(x)\quad \forall x\in\calX.
  \label{eq:value_coeff_def}
\end{equation}
When $V_j^h(x)$ is affine in an input box $x\in[x^L,x^U]$, the lower bound in~\eqref{eq:value_coeff_def} is
\begin{equation}
  \operatorname{LB}_{x\in[x^L,x^U]}[w^\top x+b]
  =(w^+)^\top x^L+(w^-)^\top x^U+b,
  \label{eq:affine_input_box_lb}
\end{equation}
with $w^+=\max(w,0)$ and $w^-=\min(w,0)$. The non-attention pieces are gathered into
\begin{equation}
  b_t'=\beta_t+
  \operatorname{LB}_{x\in\calX}\left[
    \sum_i\gamma_{ti}^{\top}H_i(x)+\sum_i\gamma_{ti}^{\top}b_O
  \right].
  \label{eq:b_prime_def}
\end{equation}
The target-wise Vertex-CROWN lower bound is
\begin{equation}
  L_{\mathrm{VC},t}
  =b_t'+\sum_{h,i}L_{\mathrm{box}}\!\left(c_{tih:},\ell_{ih:},u_{ih:}\right).
  \label{eq:full_target_vc_bound}
\end{equation}

\begin{proposition}[Soundness with affine suffix lower bounds]
\label{prop:suffix_hybrid_soundness}
Fix a target class $t$. Suppose the suffix of an attention block admits a sound affine lower bound~\eqref{eq:suffix_affine_bound}; the attention scores satisfy $\ell_{ij}^h\le s_{ij}^h(x)\le u_{ij}^h$ for all $x\in\calX$; and the value-side coefficients satisfy~\eqref{eq:value_coeff_def}. Then $L_{\mathrm{VC},t}$ in~\eqref{eq:full_target_vc_bound} is a sound lower bound. If $L_{\mathrm{CROWN},t}$ is any other sound lower bound on the same target margin, then $\max\{L_{\mathrm{CROWN},t},L_{\mathrm{VC},t}\}$ is sound as well.
\end{proposition}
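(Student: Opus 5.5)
The plan is to reduce to the same three-link chain used in the proof of Proposition~\ref{prop:vc_soundness}. Fix $t$ and an arbitrary $x\in\calX$. Substituting~\eqref{eq:attention_residual_state} into~\eqref{eq:suffix_affine_bound} gives
\begin{equation}
  m_t(x)\ge \beta_t+\sum_i\gamma_{ti}^{\top}\bigl(H_i(x)+b_O\bigr)+\sum_{h,i}\eta_{tih}^{\top}O_i^h(x),
\end{equation}
where I use $\gamma_{ti}^{\top}W_O^h O_i^h=\eta_{tih}^{\top}O_i^h$ with $\eta_{tih}=(W_O^h)^{\top}\gamma_{ti}$. The first two summands are bounded below by $b_t'$, by definition~\eqref{eq:b_prime_def}, because a lower bound over all of $\calX$ is a lower bound at this $x$. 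I will take $\operatorname{LB}$ to mean any sound lower bound, for instance the formula~\eqref{eq:affine_input_box_lb} when $H$ is affine in $x$. The remaining work is the attention term.

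For each pair $(h,i)$, expand $\eta_{tih}^{\top}O_i^h(x)=\sum_j a_{ij}^h(x)\,\eta_{tih}^{\top}V_j^h(x)$. The weights $a_{ij}^h(x)$ are softmax entries and hence nonnegative, and~\eqref{eq:value_coeff_def} gives $\eta_{tih}^{\top}V_j^h(x)\ge c_{tihj}$. Therefore
\begin{equation}
  \eta_{tih}^{\top}O_i^h(x)\ge \sum_j\softmax(s_{i:}^h(x))_j\,c_{tihj}.
\end{equation}
Since $s_{i:}^h(x)\in\prod_j[\ell_{ij}^h,u_{ij}^h]$ by the score hypothesis, this point is feasible for the minimization defining $L_{\mathrm{box}}$, and so the right-hand side is at least $L_{\mathrm{box}}(c_{tih:},\ell_{ih:},u_{ih:})$. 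By Theorem~\ref{thm:threshold}, that quantity is exactly what Algorithm~\ref{alg:threshold} returns. Summing over $(h,i)$ and adding the $b_t'$ bound yields $L_{\mathrm{VC},t}\le m_t(x)$. Since $x$ was arbitrary, $L_{\mathrm{VC},t}$ is sound.

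The hybrid claim is immediate: if $L_{\mathrm{CROWN},t}\le m_t(x)$ and $L_{\mathrm{VC},t}\le m_t(x)$ for every $x$, then their maximum is also at most $m_t(x)$ for every $x$.

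The only step requiring care is the bookkeeping in the substitution. I need to make sure that $b_t'$ captures every non-attention term, including the residual $H_i(x)$ and the bias $b_O$. I also need to make sure that the per-row lower bounds are legitimately taken separately for each $x$: each inequality holds pointwise at the same $x$ before summing, so combining worst cases across rows only loosens the bound and does not affect soundness. Beyond this, no analytic obstacle remains, since nonnegativity of the softmax weights handles the value-side sign issue.
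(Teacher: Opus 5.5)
Your proposal is correct and follows the paper's proof essentially step for step. You substitute the residual state into the suffix bound and absorb the non-attention terms into $b_t'$; you then combine nonnegativity of the softmax weights with the value-coefficient bound, use feasibility of $s_{i:}^h(x)$ in the score box to reach $L_{\mathrm{box}}$, sum over heads and rows, and take the maximum of two lower bounds. Your explicit bookkeeping, namely $\eta_{tih}=(W_O^h)^{\top}\gamma_{ti}$ and the residual and bias terms captured by $b_t'$, only spells out what the paper leaves implicit.
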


\begin{proof}
Start from~\eqref{eq:suffix_affine_bound} and substitute~\eqref{eq:attention_residual_state}. The non-attention affine terms are lower-bounded by $b_t'$ in~\eqref{eq:b_prime_def}. For each head $h$ and row $i$, softmax weights are nonnegative and sum to one, so~\eqref{eq:value_coeff_def} gives
\begin{equation}
  \sum_j a_{ij}^h(x)\eta_{tih}^{\top}V_j^h(x)
  \ge
  \sum_j a_{ij}^h(x)c_{tihj}.
\end{equation}
The score vector $s_i^h(x)$ is feasible for the score box $[\ell_{ih:},u_{ih:}]$, so Theorem~\ref{thm:threshold} yields
\begin{equation}
  \sum_j a_{ij}^h(x)c_{tihj}
  \ge L_{\mathrm{box}}\!\left(c_{tih:},\ell_{ih:},u_{ih:}\right).
\end{equation}
Summing over rows and heads recovers~\eqref{eq:full_target_vc_bound}. The maximum of two lower bounds on the same scalar margin is again a lower bound.
\end{proof}

\begin{algorithm}[t]
\caption{Target-wise CROWN/Vertex hybrid}
\label{alg:target_hybrid}
\begin{algorithmic}[1]
\STATE \textbf{Input:} input box $\calX$, true class $y$, target set $\calT=\{t:t\ne y\}$
\FOR{each target $t\in\calT$}
  \STATE Compute direct CROWN margin lower bound $L_{\mathrm{CROWN},t}$
  \STATE Construct score boxes $(\ell_{ih:},u_{ih:})$ for all rows and heads
  \STATE Construct suffix affine lower bound $(\beta_t,\gamma_t)$ over $H^+$
  \STATE Convert $\gamma_t$ to value coefficients $c_{tihj}$ using~\eqref{eq:value_coeff_def}
  \STATE $L_{\mathrm{VC},t}\leftarrow b_t' + \sum_{h,i} L_{\mathrm{box}}(c_{tih:},\ell_{ih:},u_{ih:})$
  \STATE $L_{\mathrm{hyb},t}\leftarrow \max\{L_{\mathrm{CROWN},t},L_{\mathrm{VC},t}\}$
\ENDFOR
\STATE \textbf{return} certified iff $\min_{t\in\calT}L_{\mathrm{hyb},t}>0$
\end{algorithmic}
\end{algorithm}

\section{Numerical Evaluation and Certificate Soundness}
\label{app:numerics}

The exactness statements are real-arithmetic statements. The optimized implementation used in our experiments evaluates Algorithm~\ref{alg:threshold} with standard floating-point PyTorch operations, including the stability shift $a=\max_j u_j$. This is the usual implementation path for empirically comparing verifier bounds, but it is not by itself a proof-carrying floating-point certificate.

For proof-producing use, the same $K+1$ threshold structure can be evaluated conservatively with outward-rounded interval arithmetic. After the stability shift, compute intervals $\widehat L_j$ and $\widehat U_j$ enclosing $\exp(\ell_j-a)$ and $\exp(u_j-a)$. For each threshold $m$, form interval enclosures for
\begin{align}
  N_m &= \sum_{j\le m}c_{(j)}U_{(j)}+\sum_{j>m}c_{(j)}L_{(j)},\\
  D_m &= \sum_{j\le m}U_{(j)}+\sum_{j>m}L_{(j)}.
\end{align}
Because $D_m>0$, interval division yields an enclosure $[\underline\tau_m,\overline\tau_m]$ of the exact threshold value. Then
\begin{equation}
  \underline L_{\mathrm{box}}=\min_{m=0,\ldots,K}\underline\tau_m
  \le L_{\mathrm{box}}(c,\ell,u)
\end{equation}
is a conservative lower bound for the score-box subproblem. A high-precision reference implementation based on Python \texttt{Decimal} arithmetic with outward padding is provided in the supplementary material. The reported GPU certified rates and lower bounds use the faster PyTorch implementation unless explicitly marked as interval-certified.

\clearpage

\section{Additional Experimental Evidence}
\label{app:additional_experiments}

This appendix collects the supporting evidence behind the main experimental claims. To help navigate the material, here is a brief guide to what each subsection addresses and the key tables within it:
\begin{itemize}
  \item \textbf{Protocol and numerical status} (Tables~\ref{tab:protocol_app}--\ref{tab:numerical_status_app}): Can these results be reproduced, and are the floating-point outputs trustworthy?
  \item \textbf{GaLileo-style baseline} (Table~\ref{tab:galileo_baseline_app}): How does Vertex-Softmax compare to the closest prior method on the same score-box problem?
  \item \textbf{Solver correctness and slack} (Tables~\ref{tab:threshold_runtime_app}--\ref{tab:slack_decomposition_app}): Does the $K{+}1$ threshold solver match brute-force enumeration, and once softmax is solved exactly, where does the remaining looseness come from?
  \item \textbf{Runtime accounting} (Table~\ref{tab:runtime_breakdown_app}): Is the hybrid verifier too expensive relative to direct CROWN?
  \item \textbf{Paired improvements and stability} (Tables~\ref{tab:paired_nonmnist_app}--\ref{tab:bootstrap_ci_app}): Are the gains broad-based across images, and do they hold across training seeds?
  \item \textbf{Strong-baseline details} (Tables~\ref{tab:paired_strong_baseline_app}--\ref{tab:selected_strong_baselines_app}): Full per-trial comparisons with alpha-CROWN and ABCrown-BaB.
\end{itemize}

\subsection{Protocol and numerical status}

A natural concern with any verifier comparison is whether the evaluation is fair and the floating-point outputs are trustworthy. Table~\ref{tab:protocol_app} records the evaluation protocol, denominators, and hardware for each benchmark so that results can be reproduced. Table~\ref{tab:numerical_status_app} addresses numerical soundness: it separates the proof-producing interval-oracle checks for the Vertex-Softmax primitive from the standard floating-point verifier outputs used for GPU-scale experiments.

\begin{table*}[ht]
\centering
\setlength{\tabcolsep}{5pt}
\caption{Experiment protocol summary. All image inputs use clipped pixel $\ell_\infty$ boxes in $[0,1]$. Unless otherwise noted, score boxes are computed via auto\_LiRPA/CROWN on the scalar score module, and all reported certificates are standard floating-point verifier outputs. Hardware is RTX~5090 except where marked ``mixed.'' For learned patch-attention rows and conditional appendix rows, the listed $N$ is the number of clean-correct examples selected for certification per seed. For the Fashion/CIFAR full-block certificates in Table~\ref{tab:full_mlp_nonmnist}, $N$ is the first 100 evaluation examples per seed and clean-incorrect examples are counted as uncertified. Synthetic rows compute rates over all listed trials.}
\label{tab:protocol_app}
\begin{tabular}{@{}L{0.24\textwidth}L{0.32\textwidth}cL{0.10\textwidth}L{0.15\textwidth}@{}}
\toprule
Benchmark & Score / suffix method & Seeds & $N$ / seed & Clean acc. \\
\midrule
\multicolumn{5}{@{}l}{\textit{Synthetic score boxes}} \\[2pt]
Scalable sweep & interval-product; no learned suffix & 3 & 200 trials & n/a \\
$\alpha$/ABCrown blocks & synthetic blocks; CROWN score boxes & 3 & selected & n/a \\
\midrule
\multicolumn{5}{@{}l}{\textit{Learned patch-attention models}} \\[2pt]
MNIST binary, 16 tok.  & CROWN; linear classifier & 3 & 1000 & 0.992 \\
MNIST binary, 49 tok.  & CROWN; linear classifier & 2 &  500 & 0.993 \\
MNIST 10-class, 16 tok. & CROWN; linear classifier & 3 &  500 & 0.875 \\
Residual MHA & CROWN; output proj.\ + linear & 3 & 500 & 0.998 \\
\midrule
\multicolumn{5}{@{}l}{\textit{Full attention--residual--MLP blocks}} \\[2pt]
MNIST block & CROWN; target-wise suffix over $H^+$ & 3--5 & 200 & 0.749 \\
Fashion-MNIST block & CROWN; target-wise suffix over $H^+$ & 5 & 100 eval & 0.708/0.672$^{\dagger}$ \\
CIFAR-10 gray block & CROWN; target-wise suffix over $H^+$ & 5 & 100 eval & 0.346 \\
\bottomrule
\multicolumn{5}{@{}p{0.94\textwidth}@{}}{\rule{0pt}{2.2ex} $^{\dagger}$\,0.708 for $d{=}32,h{=}4$; 0.672 for $d{=}16,h{=}2$. ABCrown-BaB uses a fixed 600s per-instance budget.}\\
\multicolumn{5}{@{}p{0.94\textwidth}@{}}{Runtime claims use RTX~5090 rows only; Fashion/CIFAR rows use mixed GPUs.}
\end{tabular}
\end{table*}

\begin{table*}[ht]
\centering
\setlength{\tabcolsep}{2pt}
\caption{Numerical status checks. The oracle interval rows validate the Vertex-Softmax threshold primitive itself. The Fashion-MNIST rows use the exact evaluation-denominator full-block rerun at $\epsilon=0.02$: FP cert. is counted over 500 evaluation examples, while near-zero counts inspect the clean-correct margins that were actually bounded. These checks do not on their own constitute end-to-end outward-rounded LiRPA certificates.}
\label{tab:numerical_status_app}
\begin{tabular}{L{0.23\textwidth}L{0.28\textwidth}rrL{0.13\textwidth}L{0.13\textwidth}}
\toprule
Check & Scope & $N$ & FP cert. & Near-zero $(0,10^{-6}]$ & Near-zero $(0,10^{-4}]$ \\
\midrule
Decimal interval regression & random score boxes, $K\le8$ & 360 & n/a & 0 violations & 0 violations \\
Decimal interval regression & large dynamic range score box & 1 & n/a & 0 violations & 0 violations \\
Fashion-MNIST $\epsilon=0.02$ & CROWN image margins & 500 & 28 & 0 & 0 \\
Fashion-MNIST $\epsilon=0.02$ & Vertex-CROWN image margins & 500 & 70 & 0 & 0 \\
Fashion-MNIST $\epsilon=0.02$ & Hybrid image margins & 500 & 70 & 0 & 0 \\
\bottomrule
\end{tabular}
\end{table*}

The interval-oracle checks confirm that the Vertex-Softmax primitive produces correct results to within numerical tolerance. For the GPU-scale experiments, no certified margins fall in the near-zero band where floating-point rounding could flip a certification decision.

\subsection{GaLileo-style implementation baseline}

Table~\ref{tab:galileo_baseline_app} reports the GaLileo-style implementation baseline for $K\le16$. Because the original GaLileo code is not publicly available, these results are from our independent reimplementation and are reported only as an implementation baseline, not as an official reproduction. Vertex-Softmax is tighter in every tested case.

\begin{table}[ht]
\centering
\setlength{\tabcolsep}{3pt}
\caption{GaLileo-style implementation baseline at $\epsilon=0.02$, three seeds, 600 trials per row. These results are from our independent reimplementation (original code not publicly available).}
\label{tab:galileo_baseline_app}
\begin{tabular}{rlrrrr}
\toprule
$K$ & Method & Cert. rate & Mean lower & Gap & Time (s) \\
\midrule
4   & GaLileo-style & 0.165 & -0.2164 & 0.4707 & 1.8171 \\
4   & Vertex        & 0.180 & -0.2020 & 0.4563 & 0.0298 \\
8   & GaLileo-style & 0.037 & -0.3623 & 0.5230 & 4.9151 \\
8   & Vertex        & 0.053 & -0.3367 & 0.4973 & 0.0299 \\
16  & GaLileo-style & 0.002 & -0.4604 & 0.5534 & 19.2317 \\
16  & Vertex        & 0.007 & -0.4280 & 0.5211 & 0.0302 \\
\bottomrule
\end{tabular}
\end{table}

\subsection{Exact solver and slack diagnostics}

Two questions arise about the Vertex-Softmax primitive: does the $K+1$ threshold solver actually match exhaustive enumeration, and once it solves the softmax subproblem exactly, how much looseness remains from other sources? Table~\ref{tab:threshold_runtime_app} addresses the first: the threshold solver agrees with exhaustive search to within $5\times10^{-7}$ for all tested $K$, while running orders of magnitude faster at $K\ge16$. Table~\ref{tab:slack_decomposition_app} addresses the second by decomposing the total verification gap on controlled tiny instances into its constituent sources.

\begin{figure}[ht]
  \centering
  \includegraphics[width=0.9\linewidth]{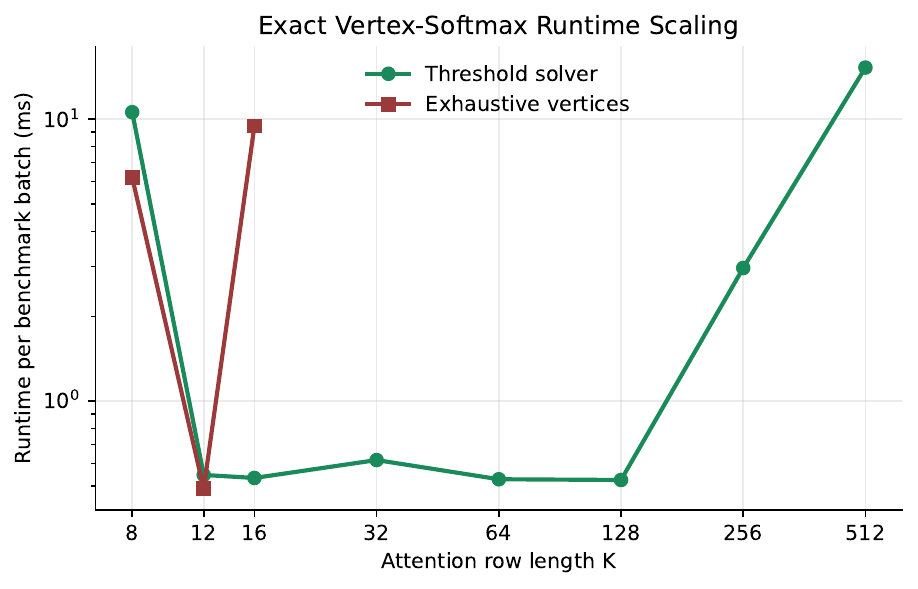}
  \caption{Runtime scaling for the exact Vertex-Softmax threshold solver. Exhaustive enumeration is shown only where feasible.}
  \label{fig:threshold_runtime_app}
\end{figure}

\begin{table}[ht]
\centering
\setlength{\tabcolsep}{4pt}
\caption{Threshold solver exactness and runtime. Exhaustive enumeration is shown only where feasible.}
\label{tab:threshold_runtime_app}
\begin{tabular}{rrrr}
\toprule
$K$ & Threshold (s) & Exhaustive (s) & Max diff. \\
\midrule
8   & 0.00046 & 0.00144 & $3.6{\times}10^{-7}$ \\
12  & 0.00033 & 0.00725 & $2.4{\times}10^{-7}$ \\
16  & 0.00037 & 0.11383 & $4.8{\times}10^{-7}$ \\
64  & 0.00162 & --      & -- \\
128 & 0.00453 & --      & -- \\
\bottomrule
\end{tabular}
\end{table}

\begin{table*}[ht]
\centering
\setlength{\tabcolsep}{4pt}
\caption{Slack-decomposition diagnostics on controlled tiny instances. Entries are mean lower-bound gaps over random seeds; larger values mean more looseness from that source. The softmax gap is Vertex-Softmax minus Wei-LSE on the same score boxes. Grid-based gaps use a dense two-dimensional input grid and are diagnostics rather than proof-producing certificates.}
\label{tab:slack_decomposition_app}
\begin{tabular}{lrrrrrrr}
\toprule
Diagnostic & $\epsilon$ & Softmax & Score box & Value & Row-wise & Suffix & Final \\
\midrule
single-row attention & 0.20 & 0.048 & 0.048 & 0.157 & -- & -- & 0.205 \\
two-row residual & 0.20 & 0.168 & 0.228 & -- & 0.151 & -- & 0.379 \\
ReLU suffix & 0.20 & -- & -- & -- & -- & 0.625 & 0.625 \\
\bottomrule
\end{tabular}
\end{table*}

The slack decomposition confirms that once the softmax subproblem is solved exactly, the dominant remaining sources of looseness are value-bound slack and, when present, suffix relaxation. This supports the claim that Vertex-Softmax removes the most accessible source of avoidable looseness at the score-box interface.

\subsection{Runtime accounting}

A practical concern is whether the Vertex-CROWN hybrid is too expensive relative to direct CROWN. Table~\ref{tab:runtime_breakdown_app} breaks down the wall time for the full-block MNIST setting. The Vertex-Softmax sort/sweep itself is negligible ($0.02\%$ of total time); the cost of the hybrid comes almost entirely from constructing the score, value, and suffix bounds that feed into the oracle.

\begin{table}[ht]
\centering
\setlength{\tabcolsep}{3pt}
\caption{Coarse runtime accounting for the RTX 5090 full-block MNIST setting $d=32,h=4,m=64,\epsilon=0.02$. Direct CROWN, Vertex path, and Hybrid totals are averaged over three seed runs with 200 certified images per seed. The Vertex-Softmax sort/sweep entry is estimated by scaling the measured $K=16$ threshold microbenchmark to $9$ targets, $4$ heads, and $16$ query rows; the remaining Vertex-path time is therefore attributed to score, value, and suffix-bound construction.}
\label{tab:runtime_breakdown_app}
\begin{tabular}{L{0.48\columnwidth}rr}
\toprule
Component & Sec./img & Fraction \\
\midrule
Direct CROWN & 0.1058 & 0.094 \\
Score/value/suffix bounds & 1.0147 & 0.899 \\
Vertex sort/sweep & 0.0002 & 0.000 \\
Hybrid overhead & 0.0081 & 0.007 \\
Total hybrid & 1.1288 & 1.000 \\
\bottomrule
\end{tabular}
\end{table}

\subsection{Paired improvements and seed-resampling stability}

The main-text tables report aggregate certified rates and mean lower bounds. Here we ask two finer-grained questions: does the hybrid improve bounds on most individual images (not just on average), and are the headline improvements stable across training seeds?

Table~\ref{tab:paired_nonmnist_app} reports paired improvements from the same exact evaluation-denominator reruns as Table~\ref{tab:full_mlp_nonmnist}. At $\epsilon=0.02$ on Fashion-MNIST, the hybrid improves the lower bound on $99.7\%$ of clean-correct evaluated images, with a median improvement of $4.26$ points, confirming that the gains are broad-based rather than driven by a few outliers. Table~\ref{tab:bootstrap_ci_app} gives seed-resampling intervals for the main certified-rate improvements. Because several rows have only two or three seeds, these intervals are descriptive stability checks rather than formal significance tests; nonetheless, the resampled intervals are comfortably above zero in every case.

\begin{table*}[ht]
\centering
\setlength{\tabcolsep}{4pt}
\caption{Paired lower-bound improvements for the non-MNIST full-block rows from the exact evaluation-denominator reruns. Mean $\Delta$ is seed-averaged to match Table~\ref{tab:full_mlp_nonmnist}; median $\Delta$ and fraction improved are computed over the clean-correct evaluated images whose margins were bounded.}
\label{tab:paired_nonmnist_app}
\begin{tabular}{llrrrr}
\toprule
Dataset / setting & $\epsilon$ & $N$ & Mean $\Delta$ & Median $\Delta$ & Fraction improved \\
\midrule
Fashion-MNIST $d=32,h=4,m=64$ & 0.01  & 358 &  0.215 &  0.000 & 0.307 \\
Fashion-MNIST $d=32,h=4,m=64$ & 0.02  & 358 &  6.402 &  4.256 & 0.997 \\
Fashion-MNIST $d=32,h=4,m=64$ & 0.03  & 358 & 33.608 & 27.577 & 1.000 \\
Fashion-MNIST $d=16,h=2,m=64$ & 0.02  & 344 &  4.608 &  2.254 & 0.971 \\
CIFAR-10 gray $d=32,h=4,m=64$ & 0.005 & 188 &  0.182 &  0.000 & 0.277 \\
CIFAR-10 gray $d=32,h=4,m=64$ & 0.01  & 188 &  4.521 &  2.680 & 1.000 \\
\bottomrule
\end{tabular}
\end{table*}

\begin{table*}[ht]
\centering
\setlength{\tabcolsep}{3pt}
\caption{Seed-resampling stability checks for selected certified-rate improvements using stored summaries. Rows marked e2e use exact evaluation-denominator rates; the remaining learned image-model rows use conditional rates on clean-correct certification subsets. The intervals are percentile resampling intervals over training/evaluation seeds; rows with two or three seeds should be read as descriptive stability checks rather than formal significance tests.}
\label{tab:bootstrap_ci_app}
\begin{tabular}{L{0.30\textwidth}ccrrrL{0.15\textwidth}}
\toprule
Setting & $\epsilon$ & Seeds & CROWN (\%) & Method (\%) & $\Delta$ pp & Resampled interval pp \\
\midrule
MNIST patch attention, $K=16$ (Vertex) & 0.03 & 3 & 20.3 & 77.8 & +57.5 & [+39.2, +67.4] \\
MNIST patch attention, $K=49$ (Vertex) & 0.03 & 2 & 46.5 & 92.5 & +46.0 & [+32.6, +59.4] \\
MNIST 10-class patch attention (Vertex) & 0.02 & 3 & 3.3 & 12.9 & +9.5 & [+6.0, +12.0] \\
MNIST full block, $d=32,h=4,m=64$ (Hybrid) & 0.02 & 3 & 10.2 & 14.3 & +4.2 & [+2.5, +5.5] \\
Fashion full block, $d=32,h=4,m=64$ (e2e Hybrid) & 0.02 & 5 & 5.6 & 14.0 & +8.4 & [+5.8, +11.2] \\
Fashion full block, $d=16,h=2,m=64$ (e2e Hybrid) & 0.02 & 5 & 9.6 & 25.2 & +15.6 & [+14.2, +16.6] \\
CIFAR-gray full block, $d=32,h=4,m=64$ (e2e Hybrid) & 0.01 & 5 & 0.4 & 2.6 & +2.2 & [+0.8, +3.8] \\
\bottomrule
\end{tabular}
\end{table*}

\subsection{Strong-baseline details}

The main text reports that Vertex-CROWN is competitive with alpha-CROWN and outperforms ABCrown-BaB in selected settings. Tables~\ref{tab:paired_strong_baseline_app} and~\ref{tab:selected_strong_baselines_app} provide the full per-trial details behind those claims.

Table~\ref{tab:paired_strong_baseline_app} shows paired diagnostics: on the two settings where Vertex-CROWN most clearly outperforms alpha-CROWN ($K=4, d=16, \epsilon=0.05$ and $K=4, d=32, \epsilon=0.03$), Vertex-CROWN produces a higher lower bound on $21$--$23$ out of $24$ trials. The ``V-only'' column shows instances certified only by Vertex-CROWN and not by alpha-CROWN; in both settings, Vertex-CROWN certifies several instances that alpha-CROWN misses, while the reverse is rare. Table~\ref{tab:selected_strong_baselines_app} consolidates the full certified rates, lower bounds, and runtimes across all tested settings and methods, including the residual-MHA block where alpha-CROWN's optimizable slopes give it an advantage at $\epsilon=0.03$ but not at $\epsilon=0.05$.

\begin{table*}[ht]
\centering
\setlength{\tabcolsep}{2pt}
\caption{Paired strong-baseline diagnostics from the long-budget small-attention detail CSVs. Alpha-CROWN and Vertex-CROWN lower bounds are compared on the same trials. ABCrown-BaB reports status under a 600 second per-instance budget; unknown, OOM, and internal-error outcomes are counted as uncertified.}
\label{tab:paired_strong_baseline_app}
\begin{tabular}{L{0.15\textwidth}rrrrrrL{0.07\textwidth}L{0.07\textwidth}L{0.09\textwidth}}
\toprule
Setting & $N$ & $\alpha$ cert. & V cert. & ABC safe & Mean $\Delta$ & Med. $\Delta$ & V$>\alpha$ & V-only / $\alpha$-only & ABC unknown / error \\
\midrule
$K=4,d=16,\epsilon=0.05$ & 24 & 12 & 15 & 12 & 0.164 & 0.139 & 21/24 & 4/1 & 10/2 \\
$K=4,d=32,\epsilon=0.03$ & 24 & 6 & 12 & 7 & 0.250 & 0.232 & 23/24 & 6/0 & 15/2 \\
\bottomrule
\end{tabular}
\end{table*}

\begin{table*}[ht]
\centering
\setlength{\tabcolsep}{3pt}
\caption{Selected strong-baseline results used in Figure~\ref{fig:strong_baselines}. ABCrown-BaB uses a 600 second per-instance budget and reports status, so no scalar lower bound is shown for those rows. Residual-MHA rows use the residual multi-head block from the main comparison.}
\label{tab:selected_strong_baselines_app}
\begin{tabular}{L{0.30\textwidth}L{0.20\textwidth}rrr}
\toprule
Setting & Method & Cert. & Lower & Sec./trial \\
\midrule
$K=4,d=16,\epsilon=0.03$ & alpha-CROWN  & 0.742 &  0.2912 &   1.662 \\
                                  & Vertex-CROWN & 0.742 &  0.2809 &   0.042 \\
\midrule
$K=4,d=16,\epsilon=0.05$ & CROWN         & 0.125 & -1.0350 &   0.158 \\
                                  & alpha-CROWN  & 0.500 & -0.0311 &   1.537 \\
                                  & Vertex-CROWN & 0.625 &  0.1327 &   0.033 \\
                                  & ABCrown-BaB  & 0.500 & --      & 267.806 \\
\midrule
$K=4,d=32,\epsilon=0.03$ & CROWN         & 0.083 & -1.2730 &   0.190 \\
                                  & alpha-CROWN  & 0.250 & -0.1853 &   1.752 \\
                                  & Vertex-CROWN & 0.500 &  0.0645 &   0.037 \\
                                  & ABCrown-BaB  & 0.292 & --      & 395.091 \\
\midrule
$K=6,d=16,\epsilon=0.03$ & alpha-CROWN  & 0.583 &  0.1219 &   1.678 \\
                          & Vertex-CROWN & 0.591 &  0.1255 &   0.042 \\
\midrule
Residual MHA, $\epsilon=0.03$    & CROWN        & 0.313 &  -3.901 &   0.177 \\
                                  & Vertex-CROWN & 0.337 &  -2.631 &   0.071 \\
                                  & alpha-CROWN  & 0.410 &  -0.455 &   3.729 \\
\midrule
Residual MHA, $\epsilon=0.05$    & CROWN        & 0.000 & -39.475 &   0.169 \\
                                  & Vertex-CROWN & 0.163 & -13.054 &   0.070 \\
                                  & alpha-CROWN  & 0.067 & -16.334 &   3.672 \\
\bottomrule
\end{tabular}
\end{table*}

\end{document}